\documentclass{article}

\usepackage[preprint]{neurips_2026}

\usepackage[utf8]{inputenc} 
\usepackage[T1]{fontenc}    
\usepackage{hyperref}       
\usepackage{url}            
\usepackage{booktabs}       
\usepackage{amsfonts}       
\usepackage{nicefrac}       
\usepackage{microtype}      
\usepackage{amsmath}
\usepackage{amssymb}
\usepackage{amsthm}
\usepackage{algorithm}
\usepackage{algorithmic}
\usepackage{graphicx}
\usepackage{subcaption}
\usepackage{multirow}
\usepackage{xcolor}
\usepackage{booktabs}
\usepackage{makecell}
\usepackage{hyperref}
\usepackage{wrapfig}
\newtheorem{theorem}{Theorem}
\newtheorem{lemma}[theorem]{Lemma}
\newtheorem{definition}{Definition}
\usepackage{tablefootnote}
\usepackage{enumitem}
\usepackage{threeparttable}
\usepackage{hyperref}

\title{Monotonic Kolmogorov--Arnold Networks: \\ A Theoretical and Empirical Study of \\ Monotonicity as an Inductive Bias}

\author{%
  Mikhail Krasnov, Blaž Bertalanič, Carolina Fortuna \\ Jozef Stefan Institute
}

\begin{document}

\maketitle

\begin{abstract}
Monotonicity has been a long-running architectural inductive bias for neural networks, motivated by tabular, scientific, and economic settings where outputs are known to respond monotonically to certain inputs. Existing approaches are MLP- or flow-based and lack per-edge functional transparency; the only Kolmogorov--Arnold Network (KAN) variant with monotonicity, MonoKAN, enforces the constraint only on a restricted parameter subset and requires a projection-style training procedure. We close this gap with \textbf{MKAN}, a KAN with hard monotonicity guaranteed for \emph{all} parameter values via exponential reparameterization of B-spline coefficients, positive edge weights, and a monotone base activation. Training reduces to standard unconstrained gradient descent. Our headline theoretical contribution is a \emph{representation-cost} theorem: any $C^K, K >0$ feature extractor inducing a ball-shaped semantic-neighborhood partition admits a monotone realization of the equivalent neighborhood structure at $N' = N^* + k \le 2N^*$, where $k$ is the number of non-monotone coordinates of the original. The bound is architecture-agnostic and gives a principled sizing rule for monotone encoders. Empirically, MKAN is competitive with state-of-the-art monotone NNs on the SMM/ICML-2024 benchmark while being the only method that combines hard unconstrained monotonicity with KAN's per-edge functional transparency; the $2N^*$ prediction is validated in a self-supervised feature-size sweep on four real datasets, and on a controlled monotone-generative dataset MKAN recovers ground-truth factors with substantially higher Spearman alignment than KAN, MLP, and linear baselines.
\end{abstract}

\section{Introduction}
\label{sec:intro}
Monotonicity is one of the most natural and widely-used inductive biases for neural networks. In tabular, scientific, and economic applications, certain inputs are known \emph{a priori} to act monotonically on the output: higher dosage should not lower predicted toxicity, larger debt should not lower default risk, larger bean area should not predict a smaller bean class~\cite{daniels2010monotone, cano2019monotonic, cole2019avoiding, wang2020deontological}. Encoding this prior structurally rather than through data gives two advantages: it is a hard constraint that does not vanish off-distribution, and it provides functional transparency~\cite{rudin2022interpretable} where the sign of each input's effect is determinate by construction.

Monotone neural networks span over two decades of work, from min--max architectures~\cite{sill1998monotonic, igel2024smooth} and deep monotone MLPs~\cite{daniels2010monotone, runje2023constrained, sartor2025advancing}, through Deep Lattice Networks~\cite{you2017deep, gupta2016monotonic, milanifard2016fast, yanagisawa2022hierarchical}, certified~\cite{liu2020certified} and Lipschitz / bi-Lipschitz~\cite{kim2024scalable, wang2024monotone} formulations, principled initialization~\cite{hoedt2023principled}, and a parallel line on monotone normalizing flows~\cite{huang2018neural, wehenkel2019unconstrained, decao2020block, sorrenson2020disentanglement}. Across this entire literature the backbone is an MLP, a lattice, or an autoregressive flow none of which preserve the per-edge functional transparency that motivates \emph{Kolmogorov--Arnold Networks} (KANs)~\cite{liu2025kan}, which place a learnable univariate spline on each edge and expose how every input--output coordinate pair contributes to the output.

The natural question is whether KAN's edge-level transparency can be combined with hard monotonicity. The only prior answer, MonoKAN~\cite{polo2025monokan}, uses cubic Hermite splines with constrained slopes, but the constraint holds only on a restricted parameter subset, requires projection-style training, and has been evaluated only in supervised settings. To our knowledge, no prior work delivers a KAN with hard monotonicity guaranteed for \emph{all} parameter values, trainable by standard gradient descent, and supported by a representation-theoretic guarantee.

\textbf{We close this gap through the following  contributions:}

\textbf{C1: Hard, projection-free monotonicity in a KAN.} \textbf{MKAN} enforces monotonicity by construction  through (i) exponential reparameterization of B-spline coefficients~\cite{wang2025monotone}, (ii) positive edge weights, and (iii) a monotone base activation. Monotonicity is guaranteed across \emph{all} parameter values, so optimization reduces to standard gradient descent. This closes the gap left open by MonoKAN~\cite{polo2025monokan}.

\textbf{C2: A representation-cost theorem for monotonicity.} Our central theoretical contribution is a \emph{representation-cost} theorem (Theorem~\ref{thm:preservation}): any $C^K$ feature extractor inducing a ball-shaped semantic-neighborhood partition admits a monotone realization of the same partition at exactly $N' = N^* + k$ output dimensions, where $0 \le k \le N^*$ is the number of non-monotone coordinates of the source extractor; the worst case $k=N^*$ gives the headline $2\times$ bound. The result is architecture-agnostic and gives a principled sizing rule for monotone encoders.

\textbf{C3: Empirical validation across supervised and self-supervised regimes.} On the SMM/ICML-2024 benchmark suite, MKAN is competitive with state-of-the-art monotone neural networks on classification (COMPAS, LoanDefaulter, ChestXRay, Heart Disease) and regression (BlogFeedback, Auto MPG) tasks, while uniquely providing per-edge functional transparency. In a self-supervised setting, a feature-size sweep on Fashion MNIST, MNIST, Dry Bean, and MAGIC classification datasets validates the $2N^*$ prediction of Theorem~\ref{thm:preservation}, and on a controlled monotone-generative synthetic dataset MKAN achieves substantially higher Spearman alignment between learned and ground-truth factors than KAN, MLP, and linear baselines.

\section{Related Work}
\label{sec:related}

\paragraph{Monotonic neural networks.}
Architectural monotonicity has been studied for over two decades across several families: \emph{min--max} networks~\cite{sill1998monotonic, igel2024smooth} compose pointwise min/max over linear pieces with non-negative weights; \emph{deep monotone MLPs}~\cite{daniels2010monotone, runje2023constrained, sartor2025advancing} combine non-negative weights with carefully-chosen activations, with recent work removing bounded-activation requirements and proving universal approximation; \emph{lattice} approaches~\cite{you2017deep, gupta2016monotonic, milanifard2016fast, yanagisawa2022hierarchical} parameterize piecewise-linear interpolants over a grid with monotone vertex constraints; \emph{certified}~\cite{liu2020certified} networks train standard MLPs and verify post-hoc; \emph{Lipschitz} and \emph{bi-Lipschitz} monotonic networks~\cite{kim2024scalable, wang2024monotone} enforce monotonicity together with smoothness or invertibility; principled initialization for the family is studied in~\cite{hoedt2023principled}. A parallel line embeds monotonicity in \emph{normalizing flows}~\cite{huang2018neural, wehenkel2019unconstrained, decao2020block, sorrenson2020disentanglement}, where the monotone transformation is autoregressive and aimed at density estimation rather than function approximation. Across this entire body of work the backbone is an MLP, a lattice, or an autoregressive flow; none preserves the per-edge transparency that makes Kolmogorov--Arnold Networks attractive for interpretability, and none pairs monotonicity with a representation-cost theorem of the kind we prove in Theorem~\ref{thm:preservation}.

\paragraph{Kolmogorov--Arnold Networks.}
KANs~\citep{liu2025kan} replace fixed node activations with learnable univariate splines on each edge, exposing every input--output coordinate-pair contribution. Standard KAN splines are unconstrained and the overall map is non-monotone in general~\cite{somvanshi2025survey}. The only prior shape-constrained KAN, MonoKAN~\citep{polo2025monokan}, uses cubic Hermite splines with constrained slopes, but its monotonicity holds only on a restricted parameter subset, requires projection-style training, and has been evaluated only in supervised settings. MKAN closes this gap by combining the all-parameter-values exponential reparameterization of monotone cubic B-splines~\cite{wang2025monotone} with positive edge weights and a monotone base activation, yielding hard monotonicity for every parameter value, plain gradient descent training, and the representation-cost guarantee of Theorem~\ref{thm:preservation}.

\paragraph{Monotonicity as an inductive bias for representation learning.}
Monotonicity has been studied primarily as a \emph{supervised} constraint. As an inductive bias for representation learning it is largely soft, enforced through regularization in encoder objectives~\cite{burgess2018understanding, chen2016infogan} rather than through architecture, despite many domains where interpretable representations are desirable (physical sciences, fairness-constrained decision-making, biomedical measurements) coming with strong monotone priors~\cite{daniels2010monotone, cano2019monotonic, cole2019avoiding, wang2020deontological}. We pursue hard architectural monotonicity in this setting by using MKAN as both encoder and decoder of a VAE~\cite{burgess2018understanding} and DIP-VAE \cite{kumar2017variational}, treated purely as an autoencoding framework. Theorem~\ref{thm:preservation} bounds the embedding-dimension overhead required to preserve semantic neighborhood structure under this constraint, in an architecture-agnostic way. We frame the self-supervised contribution as monotone neighborhood preservation and monotone latent alignment with ground-truth factors, not as an identifiability-style disentanglement claim.

\section{Monotonicity as an Inductive Bias}
\label{sec:method}

\subsection{Background: KAN}
\label{sec:kan_prelim}

A KAN layer~\citep{liu2025kan} maps an input vector $(x_1, \ldots, x_{N_\text{in}}) \in [-1,1]^{N_\text{in}}$ to an output vector of dimension $N_\text{out}$. Each edge $(i,j)$ carries a learnable univariate spline function $\phi_{ij}$ parameterized by B-spline coefficients. Formally, the $j$-th output of a KAN layer is:
\begin{equation}
\label{eq:kan_layer}
F^{(j)}(x_1, \ldots, x_{N_\text{in}}) = \sum_{i=1}^{N_\text{in}} \Big( w^{(s)}_{ij} \, \phi_{ij}(x_i) + w^{(b)}_{ij} \, \mathrm{SiLU}(x_i) \Big),
\end{equation}
where $\phi_{ij}(x) = \sum_{k=1}^{K+p} \gamma_{ij}^{(k)} h_k(x)$ is a B-spline of order $p$ with $K$ grid knots and basis functions $\{h_k\}$. The coefficients $\gamma_{ij}^{(k)}$ are learnable parameters. The weights $w^{(s)}_{ij}$ scale the spline output, $w^{(b)}_{ij}$ scale the SiLU skip connection, and together they form a two-component activation per edge. The full description of the proposed architecture could be found at Appendix \ref{app:mkan_arch}. 

\subsection{Enforcing Monotonicity in KANs (MKAN)}
\label{sec:mkan_arch}
In this subsection, we describe how to enforce monotonicity in KANs to enhance interpretability. 
To enforce monotonicity, we require: (1) each spline $\phi'_{ij}$ to be monotonically increasing, (2) all scaling weights to be positive, and (3) the base activation to be monotonically increasing.

\paragraph{Monotonic splines via coefficient ordering.}
A cubic B-spline $\phi'(x) = \sum_{k=1}^{K+p} \gamma'^{(k)} h_k(x)$ is guaranteed to be monotonically increasing if its coefficients are strictly ordered~\citep{deboor1978practical, wang2025monotone}:
\begin{equation}
\label{eq:spline_cond}
\gamma'^{(1)} < \gamma'^{(2)} < \cdots < \gamma'^{(K+p)}.
\end{equation}
We enforce this condition through an \emph{exponential reparameterization}. Instead of directly learning the constrained $\gamma'$ coefficients, we introduce unconstrained parameters $\omega_{ij}^{(k)} \in \mathbb{R}$ and define:
\begin{equation}
\label{eq:reparam}
\gamma_{ij}'^{(1)} = \omega_{ij}^{(1)}, \qquad
\gamma_{ij}'^{(k)} = \omega_{ij}^{(1)} + \sum_{t=2}^{k} \exp\!\big(\omega_{ij}^{(t)}\big), \quad k \in \{2, \ldots, K+p\}.
\end{equation}
Since $\exp(\omega) > 0$ for all $\omega \in \mathbb{R}$, the condition in Eq.~\eqref{eq:spline_cond} is automatically satisfied. The inverse mapping from $\gamma'$ to $\omega$ is:
\begin{equation}
\label{eq:inverse}
\omega_{ij}^{(1)} = \gamma_{ij}'^{(1)}, \qquad
\omega_{ij}^{(k)} = \log\!\big(\gamma_{ij}'^{(k)} - \gamma_{ij}'^{(k-1)}\big), \quad k \in \{2, \ldots, K+p\}.
\end{equation}

\paragraph{MKAN layer.}
Combining monotonic splines with positive weight constraints and a monotonic activation, the MKAN layer is defined as:
\begin{equation}
\label{eq:mkan_layer}
F'^{(j)}(x_1, \ldots, x_{N_\text{in}}) = \sum_{i=1}^{N_\text{in}} \Big( \exp(w'^{(s)}_{ij}) \, \phi'_{ij}(x_i) + \exp(w'^{(b)}_{ij}) \, \mathrm{ReLU}(x_i) \Big) + b_j,
\end{equation}
where the exponential ensures positivity of the effective weights. The bias term $b_j$ is necessary because $\exp(w) \cdot \mathrm{ReLU}(x) \geq 0$ for all $x$, so without bias the layer output would be bounded below. We initialize these base output coefficients according to \cite{hoedt2023principled}. Stacking MKAN layers yields a monotonically increasing feature extractor, since the composition of monotonically increasing functions is monotonically increasing.

\paragraph{Parameter count.} An MKAN layer has $(K+p) \cdot N_\text{in} \cdot N_\text{out}$ spline parameters (same as KAN) plus $N_\text{in} \cdot N_\text{out}$ base weight parameters and $N_\text{out}$ bias parameters. The overhead compared to KAN is exactly $N_\text{out}$ parameters for the bias.

\subsection{Theoretical Analysis}
\label{sec:theory}
In this section, we address a fundamental question: how do imposed monotonicity constraints affect the ability of FE to measure semantic similarity in the feature space using a distance metric? Semantic structures in feature space are referred to as semantic neighborhoods, as they capture similarity between samples according to a chosen distance metric \cite{fu2020semantic}. For theoretical tractability, we assume existence of arbitrary FE, which feature space can be partitioned 
with ball-shaped semantic neighborhoods. Full proofs are available in Appendix \ref{app:proofs}.  

\begin{definition}[Feature extractor]
\label{def:fe}
A \emph{feature extractor} is a $C^K$ mapping $F: [-1,1]^N \to \mathbb{R}^{N^*}$, $F = \{F^j\}_{j=1}^{N^*}$.
\end{definition}
We define the FE on a restricted domain since real-world data, such image intensities, are inherently bounded. Furthermore, KANs are formulated on the same restricted domain \( [-1,1]^N \), ensuring consistency between the model and the data representation. Next we give definitions of a monotonic feature extractor and a semantic neighborhood:

\begin{definition}[Monotonic feature extractor]
\label{def:mono_fe}
A feature extractor $F$ is \emph{monotonically increasing} if for all $i \in \{1,\ldots,N\}$ and $j \in \{1,\ldots,N^*\}$, the partial derivative $\frac{\partial F^j}{\partial x_i}(x) \geq 0$ for all $x \in [-1,1]^N$.
\end{definition}

\begin{definition}[Semantic neighborhood]
\label{def:neighborhood}
A \emph{semantic neighborhood} $C_R(x_c)$ with radius $R$ and center $x_c$ in $\mathbb{R}^{N^*}$ is the open set $C_R(x_c) = \{x \in \mathbb{R}^{N^*} : (x - x_c)^\top \mathbf{A} (x - x_c) < R^2\}$, where $\mathbf{A} \in \mathbb{R}^{N^* \times N^*}$ is positive definite. In the special case $\mathbf{A} = \mathbf{I}$, the neighborhood reduces to an 
open Euclidean ball is called ball-shaped semantic neighborhood. 
\end{definition}
We define dataset without any assumptions on the data distribution: 
\begin{definition}[Dataset]
\label{def:dataset}
\emph{Dataset} $D$ is a finite set of pairs \[ \{(x_i, l_i)\}, \quad x\in \mathbb{R}^N, l\in \{ 1, \dots,T \}\] where $x$ is one training instance (i.e. input vector) and $l$ is a label and $T$ is a number of classes.
\end{definition}

\begin{definition}[Semantic neighborhood partition]
\label{def:partition}
A feature extractor $F$ \emph{induces a semantic neighborhood partition} of a dataset $D$ if there exist disjoint semantic neighborhoods $\{C_i\}_{i=1}^{C}$ in $\mathbb{R}^{N^*}$ such that for every $x \in D$, there exists a unique $i$ with $F(x) \in C_i$.
\end{definition}
    
We first establish that any feature extractor decomposes into monotonic components, which both consist of a polynomial and residual terms, which is pleasable since Kolmogorov-Arnold Networks are pice-wise defined polynomials. 

\begin{theorem}[Monotonic decomposition]
\label{thm:decomp}
Let $F: [-1,1]^N \to \mathbb{R}^{N^*}$ be a feature extractor as per Definition\ref{def:fe}. 
Then $F$ admits a decomposition
\[
    F(x) \;=\; F_{\mathrm{inc}}(x) + F_{\mathrm{dec}}(x),
\]
where $F_{\mathrm{inc}}, F_{\mathrm{dec}}: [-1,1]^N \to \mathbb{R}^{N^*}$ are increasing and decreasing, respectively.
\end{theorem}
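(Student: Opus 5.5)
The plan is to build the decomposition coordinatewise by adding and subtracting a linear function with large enough slope, using that $F$ is $C^K$ with $K\ge 1$ on the compact cube $[-1,1]^N$. Since each $\partial F^j/\partial x_i$ is continuous on a compact set, it is bounded. Set
\[
M \;=\; \max_{i,j}\,\sup_{x\in[-1,1]^N}\Big|\frac{\partial F^j}{\partial x_i}(x)\Big| \;<\;\infty ,
\]
and let $L(x) = \sum_{i=1}^N x_i$ be the linear form with all partial derivatives equal to $1$. Define, for every output coordinate $j$,
\[
F^j_{\mathrm{inc}}(x) = F^j(x) + M\,L(x), \qquad F^j_{\mathrm{dec}}(x) = -M\,L(x).
\]
Then $F_{\mathrm{inc}}+F_{\mathrm{dec}}=F$ identically, and both pieces are $C^K$.

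\textbf{Monotonicity check.} For every $i,j$ and every $x$ we have $\partial_i F^j_{\mathrm{inc}}(x) = \partial_i F^j(x) + M \ge 0$, so $F_{\mathrm{inc}}$ is increasing in the sense of Definition~\ref{def:mono_fe}. Also $\partial_i F^j_{\mathrm{dec}} = -M \le 0$, so $F_{\mathrm{dec}}$ is decreasing, meaning its partials are nonpositive. If strict monotonicity were wanted, I would replace $M$ by $M+1$.

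\textbf{Matching the paper's framing.} The text preceding the theorem says both components consist of a polynomial and a residual term. To match this, I would first take a polynomial approximation $P$ of $F$ in the $C^1$ norm, which exists by Stone--Weierstrass applied to the derivatives, or by Bernstein polynomials on the cube. I would then write $F = P + (F-P)$ and apply the same shift construction to each part. This yields increasing and decreasing pieces that are each a polynomial plus a residual term that is small in $C^1$. This refinement is optional and does not affect the validity of the decomposition.

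\textbf{Main obstacle.} The construction is elementary; the only delicate point is the hypothesis $K\ge 1$. For $K=0$, a merely continuous $F$ need not have bounded variation, so no increasing/decreasing decomposition need exist. The argument therefore relies on the abstract's assumption $K>0$ together with compactness of the domain, which is what gives the finite constant $M$.
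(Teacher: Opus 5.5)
Your proof is correct, and it takes a shorter route than the paper's. The paper first rescales to $[0,1]^N$. It then Taylor-expands each coordinate at the origin into a polynomial $P$ of degree $<K$ plus a remainder $R=\sum_{|\alpha|=K}h_\alpha(y)y^\alpha$. Next it splits $P$ by the signs of its coefficients, which is the only reason it needs the nonnegative cube, where every monomial is increasing. Finally it treats $R$ by adding and subtracting a multiple of $y_1+\cdots+y_N$ large enough to dominate $|\partial R/\partial y_i|$.

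That last step is exactly your shift trick. Applying it to $F$ itself, as you do, makes the Taylor expansion, the sign split and the change of variables unnecessary, because the argument only uses boundedness of the first partials.

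The paper's route buys the explicit ``polynomial plus residual'' shape of both components, which it uses as motivation for KANs. If you want that shape, the paper's exact Taylor split is simpler than your $C^1$ approximation step. If you keep the approximation, rely on Bernstein polynomials: approximating the partials separately via Stone--Weierstrass does not by itself give a $C^1$ approximant when $N>1$.

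Your route buys, besides brevity, a decreasing part $-M\sum_i x_i$ that is identical in every coordinate. In the construction of Theorem~\ref{thm:preservation}, the $k$ coordinates $-F^{(j)}_{\mathrm{dec}}$ then coincide. This indicates that a single extra coordinate, with the semi-metric of Lemma~\ref{lem:ellipsoid} adapted accordingly, already suffices whenever $k\ge 1$, so the $N^*+k$ count there is not tight.
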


Using this decomposition and Lemma \ref{lem:ellipsoid}, we construct a monotonic feature extractor that preserves semantic neighborhoods.

\begin{lemma}
\label{lem:ellipsoid}
For any $R > 0$, $N^* > 0$, $0 \leq k \leq N^*$, and $x_c \in \mathbb{R}^{N^*+k}$, if a finite set of points $P$ belongs to $C^{\rho_s}_R(x_c) = \{x : \rho_s(x, x_c) < R\}$, then there exists a semantic neighborhood $C_R(x_c)$ (Definition~\ref{def:neighborhood}) that contains $P$ and is contained in $C^{\rho_s}_R(x_c)$.
\end{lemma}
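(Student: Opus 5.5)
The plan is to read $\rho_s$ as the (pseudo)distance introduced for the monotone construction on $\mathbb{R}^{N^*+k}$. I assume it comes from a positive \emph{semi}definite quadratic form, $\rho_s(x,y)^2=(x-y)^\top \mathbf{S}\,(x-y)$ with $\mathbf{S}\succeq 0$. The expected shape is a Euclidean distance on the $N^*$ original coordinates after the $k$ non-monotone ones have been split into an increasing and a decreasing part via Theorem~\ref{thm:decomp}, e.g.\ through differences $u_j-v_j$. Such an $\mathbf{S}$ is typically singular, so $C^{\rho_s}_R(x_c)$ is an unbounded cylinder rather than a semantic neighborhood in the sense of Definition~\ref{def:neighborhood}. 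The lemma then says we can shrink this cylinder to a genuine ellipsoid without losing any point of $P$. I will take $\mathbf{A}=\mathbf{S}+\varepsilon\mathbf{I}$ for a suitably small $\varepsilon>0$.

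\textbf{Step 1 (slack from finiteness).} Since $P$ is finite and every $p\in P$ satisfies $\rho_s(p,x_c)<R$, the quantity $m:=\max_{p\in P}(p-x_c)^\top\mathbf{S}(p-x_c)$ is strictly less than $R^2$. Also set $M:=\max_{p\in P}\|p-x_c\|_2^2<\infty$.

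\textbf{Step 2 (choice of $\varepsilon$).} Choose $0<\varepsilon<(R^2-m)/M$, or any $\varepsilon>0$ if $M=0$. Then $\mathbf{A}=\mathbf{S}+\varepsilon\mathbf{I}$ is positive definite. For every $p\in P$,
\[
(p-x_c)^\top\mathbf{A}(p-x_c)\le m+\varepsilon M<R^2 ,
\]
so $P\subset C_R(x_c)$.

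\textbf{Step 3 (containment).} Since $\mathbf{A}\succeq\mathbf{S}$, any $x$ with $(x-x_c)^\top\mathbf{A}(x-x_c)<R^2$ also satisfies $(x-x_c)^\top\mathbf{S}(x-x_c)<R^2$, i.e.\ $\rho_s(x,x_c)<R$. Hence $C_R(x_c)\subseteq C^{\rho_s}_R(x_c)$, which completes the argument in the quadratic case.

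The main obstacle is that the argument depends on the exact form of $\rho_s$. If $\rho_s$ is not quadratic but a general seminorm-type gauge, with $C^{\rho_s}_R(x_c)$ convex, open and symmetric about $x_c$, I would fall back on a different route. First I would shrink the radius to $R'=\max_{p\in P}\rho_s(p,x_c)<R$ plus a small margin. Then I would inscribe an ellipsoid between the convex hull of $P\cup(2x_c-P)$ and the open convex set $C^{\rho_s}_{R}(x_c)$. Concretely, the hull is a compact subset of the open set, so some positive distance $\delta$ separates it from the complement. In directions where the gauge is degenerate, the set is unbounded, and a long thin ellipsoid aligned with those directions fits. In the remaining directions, compactness lets a John-type ellipsoid of the slightly enlarged hull fit inside the $\delta$-neighborhood. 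I expect the quadratic case to be the one actually needed, with the regularization $\varepsilon\mathbf{I}$ being the only real idea, and the delicate point being to ensure the degeneracy of $\mathbf{S}$ does not break positive definiteness.
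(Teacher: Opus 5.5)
Your proof is correct and is essentially the paper's argument. The paper's $\rho_s$ is exactly a positive semidefinite quadratic form with singular $\mathbf{S}$, as you assumed. The paper also regularizes it: it takes $\mathbf{A}=\mathbf{R}^\top\mathbf{D}\mathbf{R}$ with a $45^\circ$ rotation, so that $(x-x_c)^\top\mathbf{A}(x-x_c)=\rho_s^2(x,x_c)+\beta\sum_{i=1}^{k}\bigl((x-x_c)_{2i}+(x-x_c)_{2i-1}\bigr)^2$. It then chooses $\beta$ small using finiteness of $P$, and gets containment because the added term is nonnegative. Your isotropic $\varepsilon\mathbf{I}$ does the same job without the rotation bookkeeping (the paper's penalty acts only along the kernel directions $e_{2i}+e_{2i-1}$), and your fallback for non-quadratic gauges is not needed.
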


\begin{theorem}[Representation cost of monotonicity]
\label{thm:preservation}
Let $F: [-1,1]^N \to \mathbb{R}^{N^*}$ be a feature extractor that induces a semantic neighborhood partition of a dataset $D$ with ball-shaped neighborhoods $\{C_i\}_{i=1}^{C}$, and let $0 \le k \le N^*$ be the number of non-monotone coordinate functions of $F$. Then there exists an increasing feature extractor $F': [-1,1]^N \to \mathbb{R}^{N'}$ with $N' = N^* + k \leq 2N^*$ and semantic neighborhoods $\{C'_i\}_{i=1}^{C}$ in $\mathbb{R}^{N'}$ that induce the same partition of $D$.
\end{theorem}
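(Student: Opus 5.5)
The plan is to build $F'$ directly from the monotonic decomposition of Theorem~\ref{thm:decomp}. Order the coordinates so that $F^1,\dots,F^{N^*-k}$ are already increasing and $F^{N^*-k+1},\dots,F^{N^*}$ are the $k$ non-monotone ones. A coordinate that is monotone but decreasing is replaced by its negative; this is an isometry on that axis and maps balls to balls, so without loss of generality the monotone ones are increasing. For each non-monotone coordinate $j$, Theorem~\ref{thm:decomp} gives $F^j = F^j_{\mathrm{inc}} + F^j_{\mathrm{dec}}$. Define
\[
F'(x) = \bigl(F^1(x),\dots,F^{N^*-k}(x),\,F^{N^*-k+1}_{\mathrm{inc}}(x),\,-F^{N^*-k+1}_{\mathrm{dec}}(x),\,\dots,\,F^{N^*}_{\mathrm{inc}}(x),\,-F^{N^*}_{\mathrm{dec}}(x)\bigr)\in\mathbb{R}^{N^*+k}.
\]
Every coordinate of $F'$ is increasing, and $F'$ is $C^K$ provided the decomposition is $C^K$. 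One concrete choice that secures this is $F_{\mathrm{inc}}^j(x)=F^j(x_0)+\sum_i\int \max(\partial_i F^j,0)$ along a monotone path, with a smoothing step if needed; I would rely on Theorem~\ref{thm:decomp} for it. Clearly $N'=N^*+k\le 2N^*$.

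The original structure is recovered through the linear map $L:\mathbb{R}^{N^*+k}\to\mathbb{R}^{N^*}$ that keeps the first $N^*-k$ coordinates and, for each pair $(u,v)$, outputs $u-v$. Then $F = L\circ F'$. Define $\rho_s(y,y_c)=\|L(y-y_c)\|$, which is a seminorm-induced pseudometric on $\mathbb{R}^{N'}$. For each ball $C_i=\{z:\|z-z_i\|<R_i\}$ pick any $y_i$ with $Ly_i=z_i$. Then $F'(x)\in C^{\rho_s}_{R_i}(y_i)$ if and only if $F(x)\in C_i$. The sets $C^{\rho_s}_{R_i}(y_i)=L^{-1}(C_i)$ are cylinders over disjoint balls, so they are pairwise disjoint, and each point $F'(x)$, $x\in D$, lies in exactly one of them.

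The remaining step is to replace each cylinder $L^{-1}(C_i)$ by a genuine semantic neighborhood in the sense of Definition~\ref{def:neighborhood}, which is exactly Lemma~\ref{lem:ellipsoid}. Apply it to the finite set $P_i=\{F'(x): x\in D,\ F(x)\in C_i\}$, with center $y_i$ and radius $R_i$. This yields an ellipsoid $C'_i$ with a positive definite $\mathbf{A}_i$, for instance $\mathbf{A}_i = L^\top L+\varepsilon I$ with $\varepsilon$ small enough that the finitely many points stay inside. We then have $P_i\subseteq C'_i\subseteq L^{-1}(C_i)$. Disjointness of the $C'_i$ follows from disjointness of the cylinders. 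Every $F'(x)$ lies in some $C'_i$ with the same index as for $F$, and uniqueness follows from containment in the disjoint cylinders. Hence $F'$ induces the same partition of $D$.

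I expect the main obstacle to be the regularity and exact form of the decomposition, namely ensuring $F_{\mathrm{inc}},F_{\mathrm{dec}}$ are $C^K$ coordinatewise so that $F'$ qualifies as a feature extractor. Theorem~\ref{thm:decomp} is taken as given here. A subtler point is that the $C'_i$ are ellipsoids rather than balls. The statement only asks for semantic neighborhoods, so this is acceptable, and I would remark that balls cannot generally be expected because $L$ is a projection.
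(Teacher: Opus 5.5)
Your proposal is correct and follows essentially the same route as the paper. It uses the same pairing $(F^{j}_{\mathrm{inc}},-F^{j}_{\mathrm{dec}})$ whose difference recovers $F^{j}$, and the same seminorm $\rho_s(y,y')=\|L(y-y')\|$ under which distances in $F'$-space reproduce Euclidean distances in $F$-space. It also uses the same lemma to shrink each cylinder $L^{-1}(C_i)$ to an ellipsoid that still contains the finitely many data images; your $\mathbf{A}_i=L^\top L+\varepsilon I$ is a simpler instance of the paper's rotated diagonal matrix.

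You are slightly more careful than the paper in two places. First, you take the new center to be any $L$-preimage of the ball center, whereas the paper uses $F'(c_i)$ and so tacitly assumes each center lies in the image of $F$. Second, you state disjointness explicitly. Your path-integral aside for $F_{\mathrm{inc}}$ need not be coordinatewise monotone when $N\ge 2$, but this is harmless: the decomposition theorem's construction is already $C^K$, so you do not need it.
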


\paragraph{Practical implications.} Theorem~\ref{thm:preservation} states that replacing a general feature      
  extractor (e.g., KAN) with a monotonic one (e.g., MKAN) preserves the ability 
  to separate data points into the same neighborhoods, provided the feature     
  dimensionality is increased by at most $2\times$ on an arbitrary data distribution . This result is 
  architecture-agnostic as it applies to any $C^K$ feature extractor, and
  provides a principled guideline for choosing MKAN feature dimensions. Since
  neighborhood structure is preserved, classification methods that rely on local
   proximity, such as KNN, can achieve comparable performance with a monotonic
  feature extractor, given a feature size of at most twice the original.

\section{Experiments}
\label{sec:experiments}

We evaluate MKAN in two regimes. \textbf{Supervised} (Section~\ref{sec:exp_supervised}) reproduces the SMM/ICML-2024 protocol of~\cite{igel2024smooth} verbatim with 21 train/test splits per dataset, paired two-sided Wilcoxon at $p<0.001$ on three monotone multivariate polynomials ($N_{in}\in\{2,4,6\}$) and on the SMM benchmark suite (COMPAS, LoanDefaulter, ChestXRay, Heart Disease for classification; BlogFeedback, Auto MPG for regression). \textbf{Self-supervised} (Section~\ref{sec:exp_ssl}) deploys MKAN as both encoder and decoder of a $\beta$-VAE / DIP-VAE on (i) a controlled monotone-generative synthetic ellipses dataset (3 runs $\times$ 2K epochs; $\beta\in\{5,10,20\}$, $\lambda_d{=}\lambda_{od}\in\{0,10,50\}$) and (ii) a feature-size sweep over $\{2,4,8,16,32\}$ on MNIST, Fashion-MNIST, Dry Bean, and MAGIC (5 runs $\times$ 200 epochs; KNN classifier, $k=10$). All runs use Adam with lr $10^{-3}$, batch size $256$, fixed VAE variance $\sigma{=}0.05$, and execute on CPU (AMD EPYC 75F3) except ChestXRay (single NVIDIA A100). Per-dataset MKAN configurations and full hyperparameter ranges are reported in Appendix~\ref{app:exp_details}; an anonymized reference implementation accompanies this submission as supplementary material.

\subsection{Supervised Experiments}
\label{sec:exp_supervised}
Our supervised evaluations are divided into two sections covering synthetic and public data: in Section \ref{sec:supervised_multivariate}, MKANs are evaluated on the approximation of three generated monotonic multivariate functions with different input sizes, while In Section  \ref{sec:supervised_public}, we benchmark MKAN on standard classification and regression datasets. Both scenarios follows established way of MNN evaluation and exactly repeats evaluation pipelines from \cite{igel2024smooth}.   

\subsubsection{Monotonic multivariate functions approximation}
\label{sec:supervised_multivariate}
Three multivariate monotonic functions are generated with three input sizes $N_{in} \in \{2,4,6\}$ and domains $[0,1]^{N_{in}}$ as random polynomials of degree no more than two with positive coefficients. For example, for the input of size two we have:
\begin{equation}
    f_{target}(x_1, x_2) = (w_1 + w_2x_1 + w_3x_2 + w_4x_1x_2 + w_5x_1^2 + w_6x_2^2)/(\sum_iw_i) 
\end{equation}
with $w_1, ..w_6 \sim \mathbb{U}(0,1).$ KAN and MKANs are specified to have two layers to capture interactions in input data, grid size $G=5$ and grid range $[0,1]$. The data generation process and models optimization exactly repeat those from \cite{igel2024smooth} for fair evaluation against existing baselines: 21 train/test splits are created and the median value of MSE score is selected for comparison using a paired two-sided Wilcoxon test with p < 0.001. 

Table~\ref{tab:monotone_approx} shows MKAN performing on par with the strongest monotone-NN baseline (SMM) and substantially better than vanilla KAN. MKAN \emph{matches} SMM at $N_{in}=4$ (both $0.01$ MSE) and trails by at most $0.03$ at $N_{in}=2$ and $N_{in}=6$, while consistently outperforming the hierarchical lattice baseline (HLL). Standard KAN, lacking any monotonicity prior, overfits catastrophically as the input dimension grows ($105.12$ MSE at $N_{in}=4$, $1.6\times 10^4$ at $N_{in}=6$), confirming that the monotone inductive bias makes the architecture usable on this task. MonoKAN is omitted from this comparison; see the table footnote for details.

\begin{table}[h]
\caption{Median MSE over 21 train/test splits on three monotone multivariate function approximation tasks of input dimension $d \in \{2,4,6\}$, following the SMM/ICML-2024 protocol of~\cite{igel2024smooth}. Bold = best; italic = second-best; tied entries are bolded together. Significance is assessed via paired two-sided Wilcoxon at $p<0.001$.}
\label{tab:monotone_approx}
\centering
\begin{threeparttable}
\begin{tabular}{lccccc}
\toprule
  $N_{in}$     & KAN & \textbf{MKAN (ours)}\tnote{c} & SMM\tnote{a} & HLL\tnote{b} \\
\midrule
2 & 0.02   & \emph{0.01}   & \textbf{0.00} & 0.03 \\
4 & 105.12 & \textbf{0.01} & \textbf{0.01} & 0.03 \\
6 & 16063  & \emph{0.05}   & \textbf{0.02} & 0.10 \\
\bottomrule
\end{tabular}
\begin{tablenotes}\footnotesize
\item $^a$~Smooth Min-Max~\cite{igel2024smooth}; $^b$~Hierarchical Lattice Layer~\cite{yanagisawa2022hierarchical}; $^c$~MonoKAN~\cite{polo2025monokan} omitted (no public implementation; paper does not evaluate synthetic functions).
\end{tablenotes}
\end{threeparttable}
\end{table}

\begin{table*}[t]
\caption{Accuracy on the SMM/ICML-2024 classification benchmark suite~\cite{igel2024smooth}. Bold = best per column. Marker $^*$ indicates per-edge functional interpretability (only KAN-based methods qualify); $^\dagger$ indicates hard, unconstrained monotonicity.}
\label{tab:classification}
\centering
\begin{threeparttable}
\begin{tabular}{lccccccc}
\toprule
 & COMPAS & LoanDefaulter & \multicolumn{2}{c}{ChestXRay} & Heart Disease \\
\cmidrule(lr){4-5}
Method
  & $\uparrow\uparrow$ Test Acc
  & $\uparrow\uparrow$ Test Acc
  & \makecell{$\uparrow\uparrow$ Test Acc \\ pretrained}
  & \makecell{$\uparrow\uparrow$ Test Acc \\ end-to-end}
  & $\uparrow\uparrow$ Test Acc \\
\midrule
$\text{Certified}$       & $68.8 \pm 0.2$ & $65.2 \pm 0.1$   & $62.3 \pm 0.2$          & $66.3 \pm 1.0$          & \\
$\text{LMN}^{\dagger}$             & $69.3 \pm 0.1$ & $65.44 \pm 0.03$ & $67.6 \pm 0.6$          & $\mathit{70.0 \pm 1.4}$          & $89.6 \pm 1.9$ \\
$\text{LMN}^{\dagger}$  mini        &                & $65.28 \pm 0.01$ &                         &                         & \\
COMET           &                &                  &                         &                         & $86 \pm 3$ \\
\midrule
$\text{Crystal}^{\dagger}$         & $66.3 \pm 0.1$ & $65.0 \pm 0.1$   &                         &                         & \\
$\text{CMNN}^{\dagger}$            & $69.2 \pm 0.2$ & $65.3 \pm 0.01$  &                         &                         & $89 \pm 0$ \\
XG              & $68.5 \pm 0.1$ & $63.7 \pm 0.1$   &                         &                         & \\
\midrule
SMM$_{64}^{\dagger}$      & $\mathit{69.5 \pm 0.1}$ & $65.41 \pm 0.03$ & $\mathbf{67.9 \pm 0.4}$ & $\mathbf{70.1 \pm 1.2}$ & $88.5 \pm 1.0$ \\
SMM$_{64}^{\dagger}$ mini &                & $\mathit{65.47 \pm 0.00}$ &                         &                         & \\
SMM$_{64}^{\dagger}$ sig. &                &                  &                         &                         & $\mathbf{91.3 \pm 1.89}$ \\
\midrule
MonoKAN$^{*}$ & $69.0 \pm 0.0$ & $65.3 \pm 0.1$ &  &  & $89.0 \pm 0.0$ \\
\textbf{MKAN}$^{*\dagger}$ & $\mathbf{69.6 \pm 0.00}$ & $\mathbf{65.49 \pm 0.00}$ & $\mathit{67.7 \pm 0.02}$ & $68.7 \pm 0.01$ & $\mathit{89.6 \pm 0.02}$ \\
\bottomrule
\end{tabular}
\begin{tablenotes}\footnotesize
\item $^*$ per-edge functional interpretability; $^\dagger$ hard, unconstrained monotonicity. MKAN is the only method satisfying both. Reported uncertainties match those in the original references; small std values for MKAN/MonoKAN are limited by available seeds and should be interpreted as point estimates rather than tight confidence intervals.
\end{tablenotes}
\end{threeparttable}
\end{table*}

\begin{table}[t]
\caption{Test error on the SMM/ICML-2024 regression benchmark suite~\cite{igel2024smooth}. Bold = best per column; tied entries are bolded together. Marker $^*$ indicates per-edge functional interpretability; $^\dagger$ indicates hard, unconstrained monotonicity.}
\label{tab:regression}
\centering
\begin{threeparttable}
\begin{tabular}{lcc}
\toprule
Method & \makecell{BlogFeedback \\ $\downarrow\downarrow$ RMSE} & \makecell{Auto MPG \\ $\downarrow\downarrow$ MSE} \\
\midrule
Certified       & $0.158 \pm 0.001$ & \\
LMN$^\dagger$             & $0.160 \pm 0.001$ & $7.58 \pm 1.2$ \\
LMN$^\dagger$ mini        & $0.155 \pm 0.001$ & \\
COMET           &                   & $8.81 \pm 1.81$ \\
\midrule
Crystal$^\dagger$         & $0.164 \pm 0.002$ & \\
CMNN$^\dagger$            & $0.156 \pm 0.001$ & $8.37 \pm 0.08$ \\
XG              & $0.176 \pm 0.005$ & \\
\midrule
SMM$_{64}^\dagger$      & $0.192 \pm 0.002$    & $7.51 \pm 1.6$ \\
SMM$_{64}^\dagger$ mini & $\mathbf{0.154 \pm 0.0004}$ & \\
\midrule
MonoKAN$^{*}$ & $\mathit{0.155 \pm 0.0}$ & $\mathbf{6.2 \pm 0.02}$ \\
\textbf{MKAN (ours)}$^{*\dagger}$ & $\mathbf{0.154 \pm 0.001}$ & $\mathit{6.7 \pm 4.2}$ \\
\bottomrule
\end{tabular}
\begin{tablenotes}\footnotesize
\item $^*$ per-edge functional interpretability; $^\dagger$ hard, unconstrained monotonicity. MKAN is the only method satisfying both.
\end{tablenotes}
\end{threeparttable}
\end{table}

\subsubsection{Benchmarking on classification and regression datasets}
\label{sec:supervised_public}

We evaluate MKAN on the SMM/ICML-2024 benchmark suite~\citep{igel2024smooth}, which provides standardized partially monotone datasets. Tables~\ref{tab:classification}--\ref{tab:regression} show that MKAN is broadly competitive with the strongest monotone-NN baselines (SMM, LMN, MonoKAN) without uniformly dominating them. On classification, MKAN leads on COMPAS ($\mathbf{69.6}$ vs.\ SMM $69.5$) and LoanDefaulter ($\mathbf{65.49}$ vs.\ SMM-mini $65.47$), ranks second on ChestXRay-pretrained ($67.7$ vs.\ SMM $67.9$) and Heart Disease ($89.6$ vs.\ SMM-sig.\ $91.3$), and trails SMM on ChestXRay end-to-end ($68.7$ vs.\ $70.1$). On regression, MKAN ties SMM-mini on BlogFeedback ($\mathbf{0.154}$ RMSE) and is within $0.5$ MSE of MonoKAN on Auto MPG ($6.7$ vs.\ $6.2$), though the high variance ($\pm 4.2$) precludes a statistically significant comparison. Overall, MKAN matches leading monotone-NN methods on partially-monotone tabular and structured data, while uniquely combining hard monotonicity for all parameter values with KAN's per-edge functional transparency.
\subsection{Self-Supervised Learning}
\label{sec:exp_ssl}

In this section, we firstly evaluate the monotone latent alignment of MKANs compared to standard KAN, MLP, and Linear baselines within $\beta$-VAE and DIP-VAE frameworks on a synthetic dataset (Section~\ref{sec:synth}) in which
ground-truth factors relate to observations through monotonic dependencies, a
controlled regime not addressed in prior work. We show that MKANs achieve substantially higher Spearman alignment with ground-truth factors than KAN, MLP, and Linear, while inheriting KAN's per-edge functional interpretability. Then, in Section~\ref{sec:exp_ssl_sweep}, we evaluate KANs against MKANs within the VAE framework ($\beta=1$) on four classification datasets. We compare MKANs and KANs
across varying feature sizes to empirically validate Theorem~\ref{thm:preservation}, which characterizes the representation cost of monotonicity on general data distributions.

\subsubsection{Synthetic ellipses dataset}
\label{sec:synth}
As a proof of concept, we construct a synthetic dataset in which, unlike those used in related work, all observed pixels depend monotonically on three ground-truth latent factors $z = (z_1, z_2, z_3)$. Each sample $x$ is a $20 \times 20$ grayscale image containing an exponentially decaying elliptical spot:
\begin{equation}
    x_{i,j} = (z_3/100) \exp\!\left(-((i-10)^2/z_1^2) - ((j-10)^2/z_2^2)\right), \quad i,j \in \{1, \ldots, 20\},
\end{equation}
centered at $(10, 10)$ with semi-axes $r_1 = z_1$ and $r_2 = z_2$, where $z_1, z_2 \sim \mathcal{U}\{1, \ldots, 10\}$ and $z_3 \sim \mathcal{U}\{1, \ldots, 100\}$. By construction, the mapping from latent factors to observations is strictly monotone: increasing $z_1$ or $z_2$ expands the set of non-zero pixels, while increasing $z_3$ raises their intensity uniformly.

We train DIP-VAE with a three-dimensional latent space, employing MKAN, KAN, Linear, and MLP architectures for both encoder and decoder, varying $\lambda_d = \lambda_{od} \in \{0, 10, 50\}$. We evaluate how well the learned latent dimensions recover the ground-truth factors. Treating KAN as the state-of-the-art architecture for functionally interpretable machine learning, we demonstrate that MKANs add \emph{monotone latent alignment} on top of KAN's per-edge functional transparency in the regime where ground-truth factors act monotonically on observations.

We adopt Spearman's rank correlation coefficient $r$ between per-sample sums of learned and ground-truth latents as our primary metric, which directly quantifies the strength of monotonic correspondence and is a natural fit for this dataset's monotone-generative structure. As secondary diagnostics, we report the DCI score~\cite{eastwood2018framework}, measuring the degree to which each latent dimension captures a single generative factor, and reconstruction error computed as the $\ell_2$ norm between reconstructed and original images.

\begin{figure}[!hh]
\centering
\includegraphics[width=0.7\columnwidth]{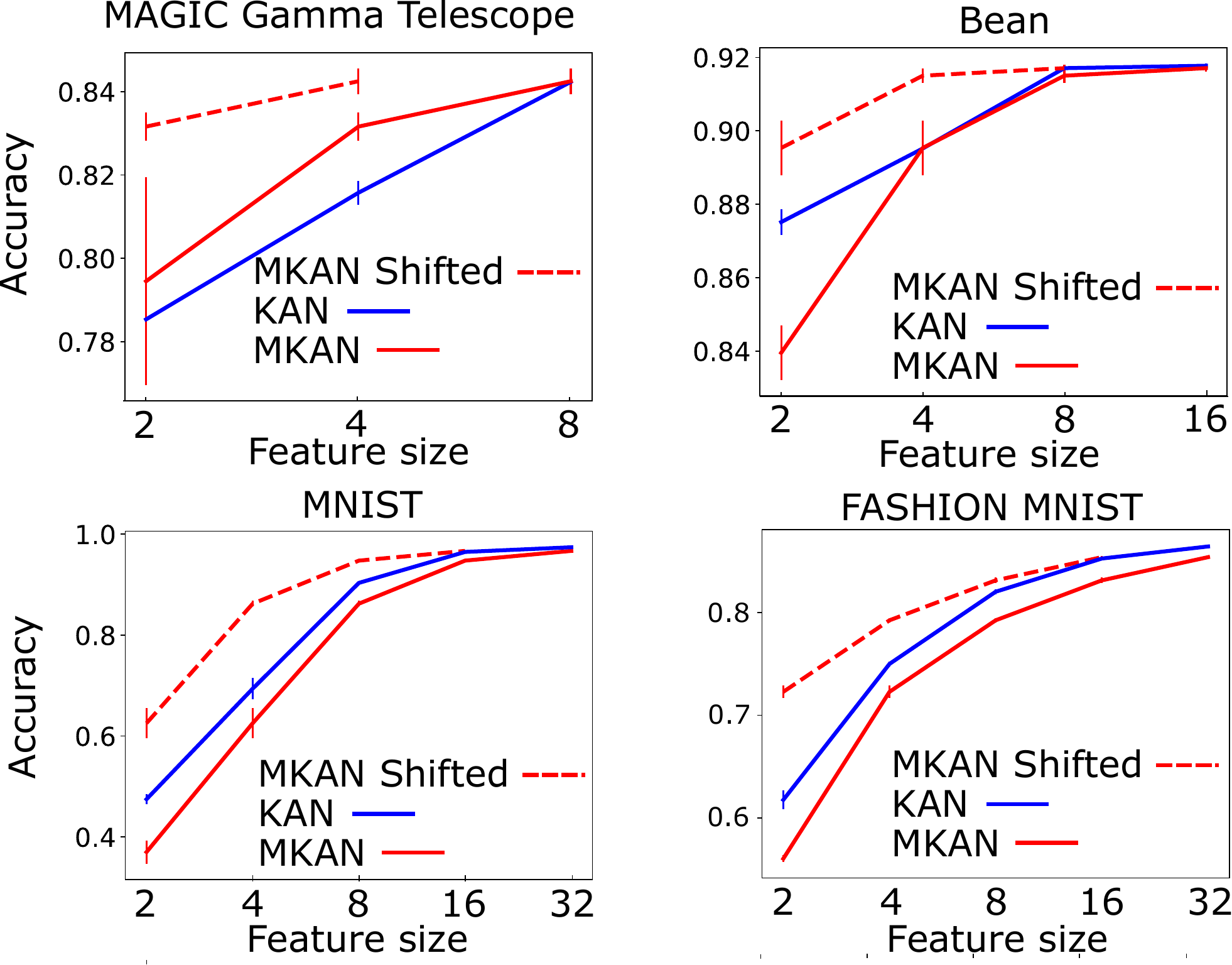}
    \caption{Figure represents classification accuracy of MKANs versus KANs within VAE framework across multiple feature sizes, four datasets and two classification metrics. Blue lines represent KANs, while red lines represent MKANs, with a dashed line standing for MKAN with a twice-increased feature space size. This comparison directly validates Theorem \ref{thm:preservation}.}
    \label{fig:kan_mkan_acc}
\end{figure}

\begin{table*}[t]
\caption{Best values of Spearman's $r$, Reconstruction MSE and DCI scores for KAN, MKAN, SMM, MLP, and Linear approaches under DIP-VAE, across $\{0,10,50\}$ $\lambda_d = \lambda_{od}$ values respectively.}
\label{tab:synthetic}
\centering
\begin{tabular}{lccc}
\toprule
    &   \multicolumn{3}{c}{\textbf{DIP-VAE}} \\
\cmidrule(lr){2-4}
Method & \makecell{Reconstruction \\ $l_2$ norm $\downarrow\downarrow$} & \makecell{Spearman's $r$  \\ $\uparrow\uparrow$}  & \makecell{DCI \\ $\uparrow\uparrow$}\\
\midrule
MLP   &  $\mathbf{3.7\pm 0.1}$          & $40.1 \pm 24.1$ & $27.6 \pm 11.8$         \\
Linear &  $74.9 \pm 0.4$          & $28.0 \pm 7.4$  & $19.1 \pm 3.6$         \\
KAN    &   $\mathit{5.8 \pm 4.1}$          & $49.4 \pm 9.7$  & $10.1 \pm9.5$   \\
\midrule
SMM   &  $37.0 \pm 7.7$  & $\mathit{80.1 \pm 1.6}$ & $\mathit{68.0 \pm 17.9}$\\
\textbf{MKAN} &  $12.1 \pm 0.8$  & $\mathbf{83.0 \pm 0.4}$ & $\mathbf{72.6 \pm 0.1}$\\
\bottomrule
\end{tabular}
\end{table*}

Table~\ref{tab:synthetic} shows that monotonic architectures (MKAN and SMM) achieve better DCI and Spearman's $r$ scores than unconstrained ones (KAN, Linear, and MLP), indicating that monotonic constraints serve as a powerful inductive bias for reflecting underlying semantics in the learned feature space. Unconstrained architectures may yield better reconstruction scores due to their wider parameter search space, even if the learned representations do not faithfully capture the data's generative structure. Notably, MKAN achieves Spearman's $r$ and DCI scores of $83$ and $72.6$, significantly outperforming KAN at $49.4$ and $10$, while KAN obtains a better reconstruction score of $5.8$ versus $12.1$. 

\subsubsection{Representation cost of monotonicity (Validating Theorem~\ref{thm:preservation})}
\label{sec:exp_ssl_sweep}

As shown in Section~\ref{sec:mkan_arch}, MKANs guarantee hard monotonicity while inheriting KAN's per-edge functional transparency. Here, we empirically evaluate the performance difference between MKANs and KANs in a self-supervised classification setting on general data.

Theorem~\ref{thm:preservation} predicts that an MKAN with $2N^*$ features should achieve classification performance comparable to or better than a KAN with $N^*$ features. We evaluate both architectures within a VAE framework across four datasets and feature sizes $\{2, 4, 8, 16, 32\}$, using MNIST, Fashion MNIST, Dry Bean~\cite{dry_bean_602}, and MAGIC Gamma Telescope~\cite{magic_gamma_telescope_159}. These datasets were selected to ensure monotonic feature transformations carry genuine semantic meaning across a range of scales and class cardinalities. Classification is performed via KNN ($K=10$) on learned features, with VAEs trained for 200 epochs at lr $0.001$.

Figure~\ref{fig:kan_mkan_acc} reports Accuracy score across feature sizes, with blue and red lines representing KAN and MKAN respectively; dashed lines show the MKAN curve shifted one tick left to align $2N$ MKAN features with $N$ KAN features. For Dry Bean and MNIST, KAN performance at a given feature size falls between MKAN at the same and twice the size, directly validating Theorem~\ref{thm:preservation}. Moreover, since MKAN with $2N$ features significantly outperforms KAN with $N$ features, MKAN can match KAN's performance with fewer than $2N$ features, justifying the benefit of preserving monotonicity without fully decomposing components as in Theorem~\ref{thm:preservation}. For MAGIC Gamma Telescope, MKAN surpasses KAN even at equal feature sizes, which we attribute to improved extrapolation properties of monotonic representations.

\section{Discussion and Limitations}
\label{sec:discussion}
\textbf{Zero violation rate has a measurable downstream effect.} By-construction monotonicity has a measurable consequence MonoKAN cannot match: zero violation rate throughout training versus constrained-subset enforcement maintained by projection. On the synthetic ellipses dataset, where the ground truth is strictly monotone, this translates into a Spearman alignment $83.0$ vs.\ $49.4$ under DIP-VAE, with DIP-VAE DCI $72.6$ vs.\ $10.1$ (Table~\ref{tab:synthetic}). Unconstrained baselines fit reconstruction better but do not recover monotone structure which represents a separation only available to architectures that encode the prior in their parameterization.

\textbf{Monotonicity is a quantifiable trade-off, not a categorical loss.} Theorem~\ref{thm:preservation} reframes the cost of monotonicity as an embedding-dimension trade-off. The Dry Bean, MAGIC Gamma Telescope, MNIST, and Fashion MNIST curves in Figure~\ref{fig:kan_mkan_acc} sit between MKAN at $N^*$ and MKAN at $2N^*$, validating the prediction; on MAGIC Gamma Telescope MKAN matches KAN at equal feature size, suggesting the constraint acts as a beneficial bias rather than an overhead in monotone-structured data.

\paragraph{Limitations.} (i) MKAN lacks an adaptive grid range. (ii) Theorem~\ref{thm:preservation} is an existence result, not a learning guarantee. (iii) The analysis assumes ball-shaped neighborhoods; ellipsoidal neighborhoods (Definition~\ref{def:neighborhood}) are a natural extension. (iv) Self-supervised experiments use $\beta$-VAE / DIP-VAE; FactorVAE-style total-correlation penalties~\cite{duan2022factorvae} are unexplored. (v) The exponential reparameterization Eq.~\eqref{eq:mkan_layer} can be ill-conditioned for very large $\omega$; weight decay sufficed here, but principled treatment may be warranted for deeper stacks.

\section{Conclusion}
\label{sec:conclusion}
The proposed MKAN establishes architectural monotonicity in Kolmogorov--Arnold Networks as a hard, projection-free, gradient-friendly inductive bias representing the first KAN variant whose monotonicity holds for every parameter value. Theorem~\ref{thm:preservation} characterizes the embedding-dimension cost of this constraint precisely: at most $2\times$, validated empirically on four real classification datasets and exact (rather than worst-case) on data with strong monotonic structure. Together, these results recast monotonicity as a quantifiable representation-cost trade-off rather than a categorical restriction, and identify MKAN as the natural KAN-based realization that combines hard monotonicity with per-edge transparency. Such a combination is unavailable to MLP- or flow-based monotone networks. Natural next steps are partial monotonicity via per-edge masks, ellipsoidal-neighborhood theorems, and identifiability under monotone-generative data.

\bibliographystyle{plainnat}
\bibliography{references}

\newpage
\appendix
\setcounter{definition}{0}
\setcounter{theorem}{0}

\section{MKAN architecture}

\subsection{Enforcing Monotonicity in KANs (MKAN)}
\label{app:mkan_arch}
In this subsection, we describe how to enforce monotonicity in KANs to enhance interpretability. 
To enforce monotonicity, we require: (1) each spline $\phi'_{ij}$ to be monotonically increasing, (2) all scaling weights to be positive, and (3) the base activation to be monotonically increasing. This structure enhances functional interpretability: each node in the network is affected by other nodes in the same monotonic way, e.g., consistently increasing or decreasing, making signal propagation directly traceable. Moreover, when the data admit monotone semantic transformations (e.g., adding active pixels to a binary image moves it toward another class), the overall monotone input--output map of the network induces \emph{monotone latent alignment}: such transformations correspond to predictable, monotone directions in feature space.

\paragraph{Monotonic splines via coefficient ordering.}
A cubic B-spline $\phi'(x) = \sum_{k=1}^{K+p} \gamma'^{(k)} h_k(x)$ is guaranteed to be monotonically increasing if its coefficients are strictly ordered~\citep{deboor1978practical, wang2025monotone}:
\begin{equation}
\label{eq:spline_cond}
\gamma'^{(1)} < \gamma'^{(2)} < \cdots < \gamma'^{(K+p)}.
\end{equation}
We enforce this condition through an \emph{exponential reparameterization}. Instead of directly learning the constrained $\gamma'$ coefficients, we introduce unconstrained parameters $\omega_{ij}^{(k)} \in \mathbb{R}$ and define:
\begin{equation}
\label{eq:reparam}
\gamma_{ij}'^{(1)} = \omega_{ij}^{(1)}, \qquad
\gamma_{ij}'^{(k)} = \omega_{ij}^{(1)} + \sum_{t=2}^{k} \exp\!\big(\omega_{ij}^{(t)}\big), \quad k \in \{2, \ldots, K+p\}.
\end{equation}
Since $\exp(\omega) > 0$ for all $\omega \in \mathbb{R}$, the condition in Eq.~\eqref{eq:spline_cond} is automatically satisfied. The inverse mapping from $\gamma'$ to $\omega$ is:
\begin{equation}
\label{eq:inverse}
\omega_{ij}^{(1)} = \gamma_{ij}'^{(1)}, \qquad
\omega_{ij}^{(k)} = \log\!\big(\gamma_{ij}'^{(k)} - \gamma_{ij}'^{(k-1)}\big), \quad k \in \{2, \ldots, K+p\}.
\end{equation}

The weights $\omega_{ij}^{(k)} $ are initialized to make corresponding coefficients  $\gamma_{ij}^{'(k)}$ to increase and have a scale of $\sigma$  typically of order $0.05$ as in the original KAN paper \cite{liu2025kan}. We achieve this by firstly generating a normal noise $n_{ij}^{(k)}$ and making it monotonic across grid dimension:
\begin{align}
&\gamma_{ij}^{'(1)} = n_{ij}^{(1)} \\
&\gamma_{ij}^{'(k)} = n_{ij}^{(1)}+ \sum_{t=2}^{k} \exp\!\big(n_{ij}^{(t)}\big)
\end{align}

The final step is to normalize these coefficients: \begin{equation}
\gamma_{ij}^{'(k)} = (\gamma_{ij}^{'(k)}-\frac{(K+p-1)\sqrt{e}}{2})\frac{2\sigma}{(K+p-1)\sqrt{e}}
\end{equation}
and derive weights $\omega_{ij}^{(k)} $ using Eq. \ref{eq:inverse}. This normalization insures that \( \mathbb{E}\gamma_{ij}^{'(K+p)} = -\mathbb{E}\gamma_{ij}^{'(1)} = \sigma
\)  because $\mathbb{E}e^{n_{i,j}^{(k)}} = \sqrt{e}$. 

\paragraph{MKAN layer.}
Combining monotonic splines with positive weight constraints and a monotonic activation, the MKAN layer is defined as:
\begin{equation}
\label{eq:mkan_layer}
F'^{(j)}(x_1, \ldots, x_{N_\text{in}}) = \sum_{i=1}^{N_\text{in}} \Big( \exp(w'^{(s)}_{ij}) \, \phi'_{ij}(x_i) + \exp(w'^{(b)}_{ij}) \, \mathrm{ReLU}(x_i) \Big) + b_j,
\end{equation}
where the exponential ensures positivity of the effective weights. The bias term $b_j$ is necessary because $\exp(w) \cdot \mathrm{ReLU}(x) \geq 0$ for all $x$, so without bias the layer output would be bounded below. We initialize these base output coefficients according to \cite{hoedt2023principled}. Stacking MKAN layers yields a monotonically increasing feature extractor, since the composition of monotonically increasing functions is monotonically increasing.

\paragraph{Parameter count.} An MKAN layer has $(K+p) \cdot N_\text{in} \cdot N_\text{out}$ spline parameters (same as KAN) plus $N_\text{in} \cdot N_\text{out}$ base weight parameters and $N_\text{out}$ bias parameters. The overhead compared to KAN is exactly $N_\text{out}$ parameters for the bias.

\section{Full Proofs}
\label{app:proofs}
In this section, we address a fundamental question: how do imposed monotonicity constraints affect the ability of FE to measure semantic similarity in the feature space using a distance metric? Semantic structures in feature space are referred to as semantic neighborhoods, as they capture similarity between samples according to a chosen distance metric \cite{fu2020semantic}. For theoretical tractability, we assume existence of arbitrary FE, which feature space can be partitioned 
with ball-shaped semantic neighborhoods. We then prove that exists a monotonic FE whose feature space can be equivalently partitioned into semantic neighborhoods. We apply this knowledge to compare the performance of KAN, used as FE, and MKAN utilized as monotonic FE. Figures~\ref{fig:arb_example} and~\ref{fig:mono_example} illustrate an example in which semantic neighborhoods induced by a FE can be reproduced by a monotonic FE, potentially at the cost of a higher embedding dimension. We start theoretical analysis with formal definitions of feature extractor, dataset and semantic neighborhood partition.

\begin{figure*}[!ht]
\hspace*{1cm}
\centering
\label{fig:mono_arb_example}
\begin{subfigure}[b]{0.35\textwidth}
    \includegraphics[width=\textwidth]{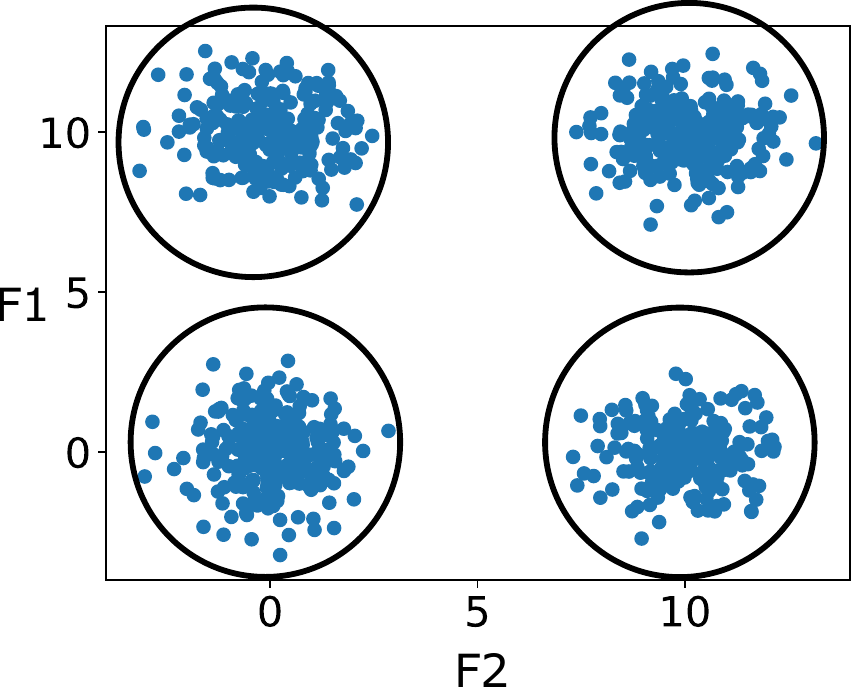}
    \caption{Samples in a 2D features space of arbitrary FE: $F_1 = x_1, F_2=x_2-x_3$. }
    \label{fig:arb_example}
\end{subfigure}
\hfill
\centering
\begin{subfigure}[b]{0.4\textwidth}
    \includegraphics[width=\textwidth]{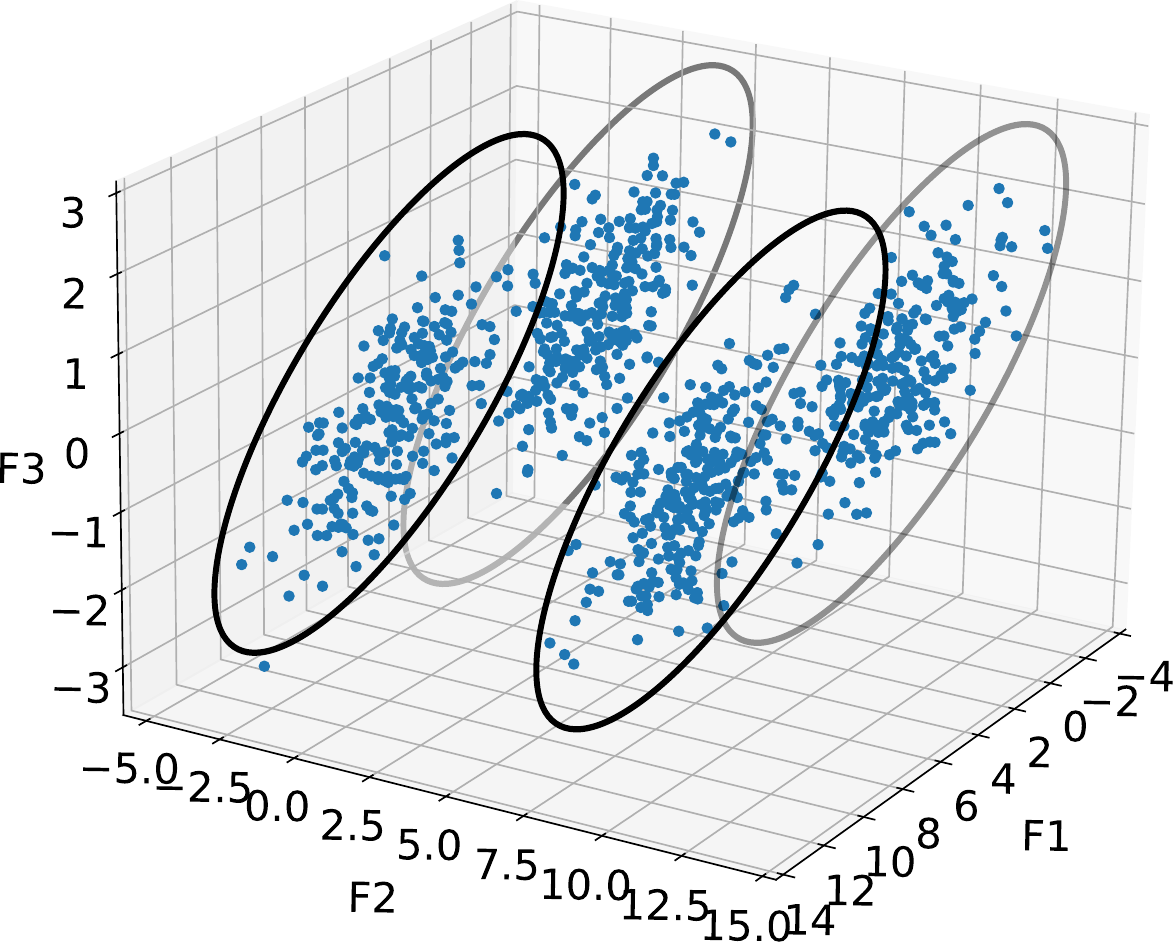}
    \caption{Samples in a 3D features space of monotonic FE: $F_1 = x_1, F_2=x_2, F_3=x_3$. }
    \label{fig:mono_example}
\end{subfigure}
\hspace*{1cm}
\caption{An example of how semantic neighborhoods in arbitrary features space can be reproduced in a monotonic one. The input data consist of three-dimensional vectors $(x_1, x_2, x_3)$, 
where $x_1 \sim 0.5\,\mathcal{N}(0,1) + 0.5\,\mathcal{N}(10,1)$, 
$x_3 \sim \mathcal{N}(0,1)$, and 
$x_2 = r + x_3 \:, r \sim 0.5\,\mathcal{N}(0,1) + 0.5\,\mathcal{N}(10,1)$.}

\label{fig:theor_dem}
\end{figure*}

\subsection{Definitions}

\begin{definition}[Feature extractor]
\label{def:fe}
A \emph{feature extractor} is a $C^K, K > 0 $ mapping $F: [-1,1]^N \to \mathbb{R}^{N^*}$, $F = \{F^j\}_{j=1}^{N^*}$.
\end{definition}

One can observe that this definition excludes most common neural networks, as they are not differentiable at specific points due to activation functions such as ReLU. Indeed, the ReLU function is differentiable everywhere except at zero, where its derivative is zero by convention to enable backpropagation. However, in general theoretical studies such as optimization of neural networks \cite{chatterjee2022convergence, du2019gradient, arora2018convergence} the neural network is usually considered to be smooth as non-differentiable activation functions like ReLU can be approximated by smooth maps. 

\begin{definition}[Monotonic feature extractor]
\label{def:mono_fe}
A feature extractor $F$ is \emph{monotonically increasing} if for all $i \in \{1,\ldots,N\}$ and $j \in \{1,\ldots,N^*\}$, the partial derivative $\frac{\partial F^j}{\partial x_i}(x) \geq 0$ for all $x \in [-1,1]^N$.
\end{definition}

\begin{definition}[Semantic neighborhood]
\label{def:neighborhood}
A \emph{semantic neighborhood} $C_R(x_c)$ with radius $R$ and center $x_c$ in $\mathbb{R}^{N^*}$ is the open set $C_R(x_c) = \{x \in \mathbb{R}^{N^*} : (x - x_c)^\top \mathbf{A} (x - x_c) < R^2\}$, where $\mathbf{A} \in \mathbb{R}^{N^* \times N^*}$ is positive definite. In the special case $\mathbf{A} = \mathbf{I}$, the neighborhood reduces to an 
open Euclidean ball is called ball-shaped semantic neighborhood. 
\end{definition}

\begin{definition}[Dataset]
\label{def:dataset}
\emph{Dataset} $D$ is a finite set of pairs \[ \{(x_i, l_i)\}, \quad x\in \mathbb{R}^N, l\in \{ 1, \dots,T \}\] where $x$ is one training instance (i.e. input vector) and $l$ is a label and $T$ is a number of classes.
\end{definition}

\begin{definition}[Semantic neighborhood partition]
\label{def:partition}
A feature extractor $F$ \emph{induces a semantic neighborhood partition} of a dataset $D$ if there exist disjoint semantic neighborhoods $\{C_i\}_{i=1}^{C}$ in $\mathbb{R}^{N^*}$ such that for every $x \in D$, there exists a unique $i$ with $F(x) \in C_i$.
\end{definition}

\subsection{Proof of Theorem~\ref{thm:decomp} (Monotonic Decomposition)}
\begin{theorem}[Monotonic decomposition]
\label{thm:decomp}
Let $F:[-1,1]^N \to \mathbb{R}^{N^*}$ be a $C^K$ feature extractor with
$K > o$. Then $F$ admits a decomposition
\[
    F(x) \;=\; F_{\mathrm{inc}}(x) \;+\; F_{\mathrm{dec}}(x),
\]
where $F_{\mathrm{inc}}$ is coordinate-wise increasing and
$F_{\mathrm{dec}}$ is coordinate-wise decreasing.
\end{theorem}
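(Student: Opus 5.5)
The plan is to build the decomposition explicitly and coordinate by coordinate, using only that each $F^j$ is $C^1$ on the compact cube; the earlier phrase ``polynomial and residual terms'' suggests one could also go through Taylor or Weierstrass approximation, but that is not needed. Fix a coordinate $F^j$. Since $F \in C^K$ with $K\ge 1$, every partial derivative $\partial F^j/\partial x_i$ is continuous on the compact set $[-1,1]^N$, hence bounded. Set
\[
M_j \;=\; \max_{1\le i\le N}\,\sup_{x\in[-1,1]^N}\Bigl|\frac{\partial F^j}{\partial x_i}(x)\Bigr| \;<\;\infty .
\]

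Next I will define
\[
F^j_{\mathrm{inc}}(x) = F^j(x) + M_j\sum_{i=1}^N x_i,
\qquad
F^j_{\mathrm{dec}}(x) = -M_j\sum_{i=1}^N x_i .
\]
Their sum is $F^j$ identically. Every partial derivative of $F^j_{\mathrm{dec}}$ equals $-M_j\le 0$. Every partial derivative of $F^j_{\mathrm{inc}}$ equals $\partial F^j/\partial x_i + M_j \ge 0$. Both maps are $C^K$, since we only added a linear function, so both are feature extractors in the sense of Definition~\ref{def:fe}. Moreover $F_{\mathrm{inc}}$ is increasing in the sense of Definition~\ref{def:mono_fe}, and $F_{\mathrm{dec}}$ is decreasing in the analogous sense. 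Stacking over $j=1,\dots,N^*$ gives the vector decomposition.

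If strict monotonicity is wanted, which is convenient later in Theorem~\ref{thm:preservation} for separating points, I will replace $M_j$ by $M_j+1$. The same computation then gives partials $\ge 1$ and $\le -1$. I also note for later use that the construction leaves $F^j$ untouched whenever it is already monotone: in that case one can take $F^j_{\mathrm{dec}}=0$ (or, if $F^j$ is decreasing, $F^j_{\mathrm{inc}}=0$). This is what yields only $k$ extra coordinates, where $k$ is the number of non-monotone coordinates.

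The one point needing care is boundedness of the derivative. This is exactly where $K>0$ (i.e.\ $C^1$) and compactness of the domain are used. On an open or unbounded domain I would instead need a variation-type argument (Jordan decomposition of bounded-variation functions), and that would be the main obstacle; on the closed cube the step is immediate.
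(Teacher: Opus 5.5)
Your proof is correct, and it is a genuinely shorter route than the paper's, although it uses the same key mechanism.

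\textbf{The paper's route.} The paper first rescales to $[0,1]^N$ via $\varphi(x)=(x+1)/2$. It then Taylor-expands each coordinate around the origin as $P+R$ and splits the Taylor polynomial $P$ by the sign of its coefficients, using that every monomial $y^\alpha$ is coordinate-wise increasing on the positive orthant. Only after that does it apply exactly your trick to the remainder: $R=(R-L\sum_i y_i)+L\sum_i y_i$, where $L$ is the maximum over $i$ of $\sup|\partial R/\partial y_i|$. It finishes with a chain-rule transfer back to $[-1,1]^N$.

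\textbf{Your route.} Since $R=\tilde f-P$ is itself $C^1$ on a compact cube, the linear correction could equally have been applied to $\tilde f$ directly, which is what you do. Because $\sum_i x_i$ is increasing on any domain, you need neither Taylor's theorem nor the change of variables. The paper needs the change of variables only so that the monomials are monotone.

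\textbf{What each buys.} The paper obtains a ``polynomial plus residual'' form for the components, which it motivates by KANs being piecewise polynomial. That structure is never used in the proof of the representation-cost theorem, which relies only on $F=F_{\mathrm{inc}}+F_{\mathrm{dec}}$ and the monotonicity of each piece. Your version gives brevity and an explicit linear decreasing part. It also makes clear that the only hypothesis actually used is boundedness of the first partials on the compact cube.

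\textbf{One wording point.} Your construction as written does not leave an already-monotone $F^j$ untouched, since it still adds $M_j\sum_i x_i$. Taking $F^j_{\mathrm{dec}}=0$ in that case is a separate choice. This is harmless, because the representation-cost proof treats monotone coordinates separately anyway.
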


\begin{proof}
The decomposition is constructed componentwise, so it suffices to treat
the scalar case; fix $j \in \{1,\dots,N^*\}$ and write $f := F^j$.

\medskip
\emph{Step 1: Decomposition on $[0,1]^N$.}
We first prove the result on the unit cube. Let
$\tilde f:[0,1]^N \to \mathbb{R}$ be $C^K$. By Taylor's theorem with
remainder, expanded around the origin,
\begin{equation}
    \tilde f(y)
    \;=\;
    \underbrace{\sum_{|\alpha|<K} c_{\alpha}\, y^{\alpha}}_{=:\,P(y)}
    \;+\;
    \underbrace{\sum_{|\alpha|=K} h_{\alpha}(y)\, y^{\alpha}}_{=:\,R(y)},
\end{equation}
where $c_{\alpha} \in \mathbb{R}$ and the coefficients $h_{\alpha}$ are
continuous on $[0,1]^N$. We decompose $P$ and $R$ separately.

\smallskip
\emph{Polynomial part.}
On $[0,1]^N$ every monomial $y^{\alpha}$ is coordinate-wise increasing.
Splitting $P$ according to the sign of its coefficients,
\[
    P_{\mathrm{inc}}(y) := \sum_{c_{\alpha}>0} c_{\alpha}\,y^{\alpha},
    \qquad
    P_{\mathrm{dec}}(y) := \sum_{c_{\alpha}<0} c_{\alpha}\,y^{\alpha},
\]
yields $P = P_{\mathrm{inc}} + P_{\mathrm{dec}}$ with $P_{\mathrm{inc}}$
increasing and $P_{\mathrm{dec}}$ decreasing.

\smallskip
\emph{Remainder part.}
Since $R$ is $C^1$ on the compact cube $[0,1]^N$, the constant
\[
    L \;:=\; \max_{1\leq i \leq N}\;\sup_{y\in[0,1]^N}
            \left|\frac{\partial R}{\partial y_i}(y)\right|
\]
is finite. Setting $S(y) := L\,(y_1 + \cdots + y_N)$, we write
\[
    R(y) \;=\; \underbrace{\bigl(R(y)-S(y)\bigr)}_{R_{\mathrm{dec}}(y)}
            \;+\; \underbrace{S(y)}_{R_{\mathrm{inc}}(y)}.
\]
Here $R_{\mathrm{inc}}$ is increasing because
$\partial S/\partial y_i = L \geq 0$, and $R_{\mathrm{dec}}$ is decreasing
because, by the choice of $L$,
$\partial(R-S)/\partial y_i = \partial R/\partial y_i - L \leq 0$.

Defining
$\tilde f_{\mathrm{inc}} := P_{\mathrm{inc}} + R_{\mathrm{inc}}$ and
$\tilde f_{\mathrm{dec}} := P_{\mathrm{dec}} + R_{\mathrm{dec}}$
gives the desired decomposition on $[0,1]^N$.

\medskip
\emph{Step 2: Transfer to $[-1,1]^N$.}
Let $\varphi:[-1,1]^N \to [0,1]^N$ be the affine bijection
$\varphi(x) := \tfrac{x+1}{2}$, and define
$\tilde F(y) := F(2y-1)$ on $[0,1]^N$. Applying Step~1 componentwise
to $\tilde F$ produces
$\tilde F = \tilde F_{\mathrm{inc}} + \tilde F_{\mathrm{dec}}$. Set
\[
    F_{\mathrm{inc}}(x) := \tilde F_{\mathrm{inc}}\bigl(\varphi(x)\bigr),
    \qquad
    F_{\mathrm{dec}}(x) := \tilde F_{\mathrm{dec}}\bigl(\varphi(x)\bigr).
\]
Then $F = F_{\mathrm{inc}} + F_{\mathrm{dec}}$, and monotonicity is
preserved under this change of coordinates: by the chain rule, since
$\partial \varphi_i/\partial x_i = 1/2 > 0$,
\[
    \frac{\partial F_{\mathrm{inc}}}{\partial x_i}(x)
    \;=\;
    \tfrac{1}{2}\,
    \frac{\partial \tilde F_{\mathrm{inc}}}{\partial y_i}\!\bigl(\varphi(x)\bigr)
    \;\geq\; 0,
\]
and analogously $\partial F_{\mathrm{dec}}/\partial x_i \leq 0$.
\end{proof}

\subsection{Proof of Lemma~\ref{lem:ellipsoid}}

\begin{lemma}
\label{lemma:elips}
\(\forall  R>0,N^*>0, 0 \le k \le N^*, x_c\in\mathbb{R}^{N^*+k}\) if a finite set of points $P$ belong to the set $C^{\rho_s}_R(x_c)=\{x \in \mathbb{R}^{N^*+k}: \rho_s(x, x_c) < R\}$, where $\rho_s$ is defined in Eq. \ref{eq:semi_metric}, then exists a semantic neighborhood $C_R(x_c)$ from Definition \ref{def:neighborhood}, which belongs to $C^{\rho_s}_R(x_c)$ and contains all points from the finite set $P$. 
\begin{equation}
\label{eq:semi_metric}
\rho_s(x,x')^2 = 
\sum_{j=1}^{k} \left( (x_{2j} - x'_{2j}) - (x_{2j-1} - x'_{2j-1}) \right)^2 
+ \sum_{j=k+1}^{N^*} (x_{j+k} - x'_{j+k})^2 
\end{equation}
\end{lemma}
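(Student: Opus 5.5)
The plan is to write $\rho_s^2$ as a positive semidefinite quadratic form and then perturb it into a positive definite one. Set $v := x - x_c \in \mathbb{R}^{N^*+k}$. By Eq.~\eqref{eq:semi_metric}, $\rho_s(x,x_c)^2 = v^\top \mathbf{B} v$, where
\[
\mathbf{B} := \sum_{j=1}^{k} (e_{2j}-e_{2j-1})(e_{2j}-e_{2j-1})^\top + \sum_{j=k+1}^{N^*} e_{j+k}e_{j+k}^\top .
\]
Here $e_i$ denotes the $i$-th standard basis vector. The first sum uses coordinates $1,\dots,2k$ and the second uses coordinates $2k+1,\dots,N^*+k$, so every coordinate appears. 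Being a sum of rank-one outer products, $\mathbf{B}$ is symmetric positive semidefinite. It is in general singular: $\rho_s$ vanishes along $e_{2j}+e_{2j-1}$. This singularity is the only reason $C^{\rho_s}_R(x_c)$ fails to be a semantic neighborhood in the sense of Definition~\ref{def:neighborhood}.

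I will take $\mathbf{A} := \mathbf{B} + \varepsilon \mathbf{I}$ for a small $\varepsilon>0$ chosen below. Then $\mathbf{A}$ is positive definite, since $v^\top \mathbf{A} v \ge \varepsilon\|v\|^2 > 0$ for $v \neq 0$, so $C_R(x_c) := \{x : (x-x_c)^\top \mathbf{A}(x-x_c) < R^2\}$ is a semantic neighborhood.

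\emph{Containment} $C_R(x_c)\subseteq C^{\rho_s}_R(x_c)$. For any $v$,
\[
\rho_s(x,x_c)^2 = v^\top \mathbf{B} v \le v^\top \mathbf{B} v + \varepsilon\|v\|^2 = v^\top \mathbf{A} v .
\]
Hence $v^\top\mathbf{A}v < R^2$ forces $\rho_s(x,x_c) < R$. This holds for every $\varepsilon > 0$.

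\emph{Covering $P$.} This is the step where finiteness of $P$ is essential. If $P$ is empty, any $\varepsilon$ works. Otherwise let
\[
m := \max_{p\in P}\rho_s(p,x_c)^2 < R^2, \qquad M := \max_{p\in P}\|p-x_c\|^2 .
\]
The strict inequality $m<R^2$ holds because the maximum over a finite set is attained and each term is strictly below $R^2$. If $M=0$, every point of $P$ equals $x_c$ and any $\varepsilon$ works. Otherwise pick $0<\varepsilon<(R^2-m)/M$. Then for each $p\in P$,
\[
(p-x_c)^\top\mathbf{A}(p-x_c) \le m + \varepsilon M < R^2,
\]
so $P\subseteq C_R(x_c)$.

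The main obstacle is conceptual rather than computational. One must recognize that $C^{\rho_s}_R(x_c)$ is an unbounded cylinder over the kernel of $\mathbf{B}$, so it cannot itself be a semantic neighborhood. Finiteness of $P$ is then exactly what allows a uniform $\varepsilon$ to exist; for an infinite $P$ approaching the boundary of $C^{\rho_s}_R(x_c)$, or unbounded along the kernel directions, the argument would fail. The remaining care is bookkeeping: checking that the index ranges in Eq.~\eqref{eq:semi_metric} give a well-defined form on $\mathbb{R}^{N^*+k}$ for every $0\le k\le N^*$. For $k=0$, $\mathbf{B}=\mathbf{I}$ and the lemma is trivial. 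For $k=N^*$, the second sum is empty, which is harmless since only positive semidefiniteness of $\mathbf{B}$ is used.
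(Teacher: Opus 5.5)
Your proof is correct and follows essentially the same route as the paper. The paper also writes the neighborhood as $\rho_s^2$ plus a small nonnegative quadratic term: it takes $\mathbf{A}=\mathbf{R}^\top\mathbf{D}\mathbf{R}$, with a $45^\circ$ rotation in each plane $(x_{2i-1},x_{2i})$, so that $(x-x_c)^\top\mathbf{A}(x-x_c)=\rho_s^2(x,x_c)+\beta\sum_{i=1}^{k}(v_{2i}+v_{2i-1})^2$ with $v=x-x_c$, and then shrinks $\beta$ using finiteness of $P$. The paper perturbs only along the kernel directions $e_{2i}+e_{2i-1}$ of your $\mathbf{B}$, whereas your isotropic $\varepsilon\mathbf{I}$ avoids the rotation bookkeeping and gives the explicit threshold $\varepsilon<(R^2-m)/M$.
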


\begin{proof}
    We construct the semantic neighborhood $C_R(x_c)$ as 
    \[
    x: (x - x_c)^T\mathbf{A}(x-x_c) < R^2,
    \] where \( \mathbf{A} = \mathbf{R^T DR}\), 
    with $\mathbf{D}$ being a positive diagonal matrix and $\mathbf{R}$ being a rotation matrix, which rotates
    planes $(x_i, x_{i-1})$ \(i \in \{1, \dots,k\}\) on 45 degrees counterclockwise: \( ( \frac{x_i-x_{i-1}}{\sqrt{2}}, \frac{x_i+x_{i-1}}{\sqrt{2}})\) and does not change other coordinates. This rotation is needed to make the neighborhood definition related to the semi-metric $\rho_s(x,x_c)$. Indeed, taking into account that \[((x - x_c)^T\mathbf{R^T DR}(x-x_c)= (\mathbf{R}(x-x_c))^T\mathbf{D}\mathbf{R}(x-x_c) \]
    The neighborhood definition can be now rewritten as:
    \begin{align*}
    &D_1\sum_{i=k+1}^{N^*}(x_{i+k}-x_{(i+k)c})^2  
    +\frac{D_2}{2}\sum_{i=1}^{k}
    (x_{2i}-x_{2ic} - (x_{2i-1}-x_{(2i-1)c}))^2 +\\
    &+\frac{D_3}{2}\sum_{i=1}^{k}
    (x_{2i}-x_{2ic} + (x_{2i-1}-x_{(2i-1)c}))^2 < R^2,
   \end{align*} where $D_1, D_2, D_3$ are coefficients of the diagonal matrix $\mathbf{D}$: 
   \[ D_1= \mathbf{D_{i+k,i+k}}, i\in\{k+1,N^*\}\]
   \[
   D_2 = \mathbf{D_{2i,2i}}, D_3 = \mathbf{D_{2i-1,2i-1}},  i\in\{1,k\} \]
   We set $D_1=1, D_2=2$ and $D_3=2\beta$, where $\beta>0$, to express this neighborhood inequality using $\rho_s$. The  inequality is then represented as:
   \begin{equation}
   \label{eq:lemma}
f(x,x_c,\beta)=\rho^2_s(x,x_c)+\beta\sum_{i=1}^{k}
    (x_{2i}-x_{2ic} + (x_{2i-1}-x_{(2i-1)c}))^2 < R^2
\end{equation}

   Let $B$ be the maximal value of the $\rho_s(x, x_c)^2$ on the finite set $P$. All points from $P$ belong to the $C_R^{\rho_s}(x_c)$, thus $B < R^2$. The second part of the equation:
   \[
   \beta\sum_{i=1}^{k}
    (x_{2i}-x_{2ic} + (x_{2i-1}-x_{(2i-1)c}))^2
   \]
   is also restricted on the finite set $P$. Therefore, we can choose sufficiently small $\beta$ parameter to make inequality \ref{eq:lemma} true for all points from $P$. This means that the set $P$ belongs to the neighborhood we constructed. Taking into account that \(f(x, x_c, \beta) > \rho_s^2(x,x_c)\) and \[\forall x \:f(x,x_c,\beta) < R^2 \rightarrow \rho_s(x,x_c) < R^2\]
   our neighborhood also belongs to the set $C^{\rho_s}_R(x_c)=\{x \in \mathbb{R}^{N'}: \rho_s(x, x_c) < R\}$ from the Lemma statement.
\end{proof}

\subsection{Proof of Theorem~\ref{thm:preservation} (Representation Cost of Monotonicity)}

\begin{figure}[!th]
\centering
    \includegraphics[width=0.6\columnwidth]{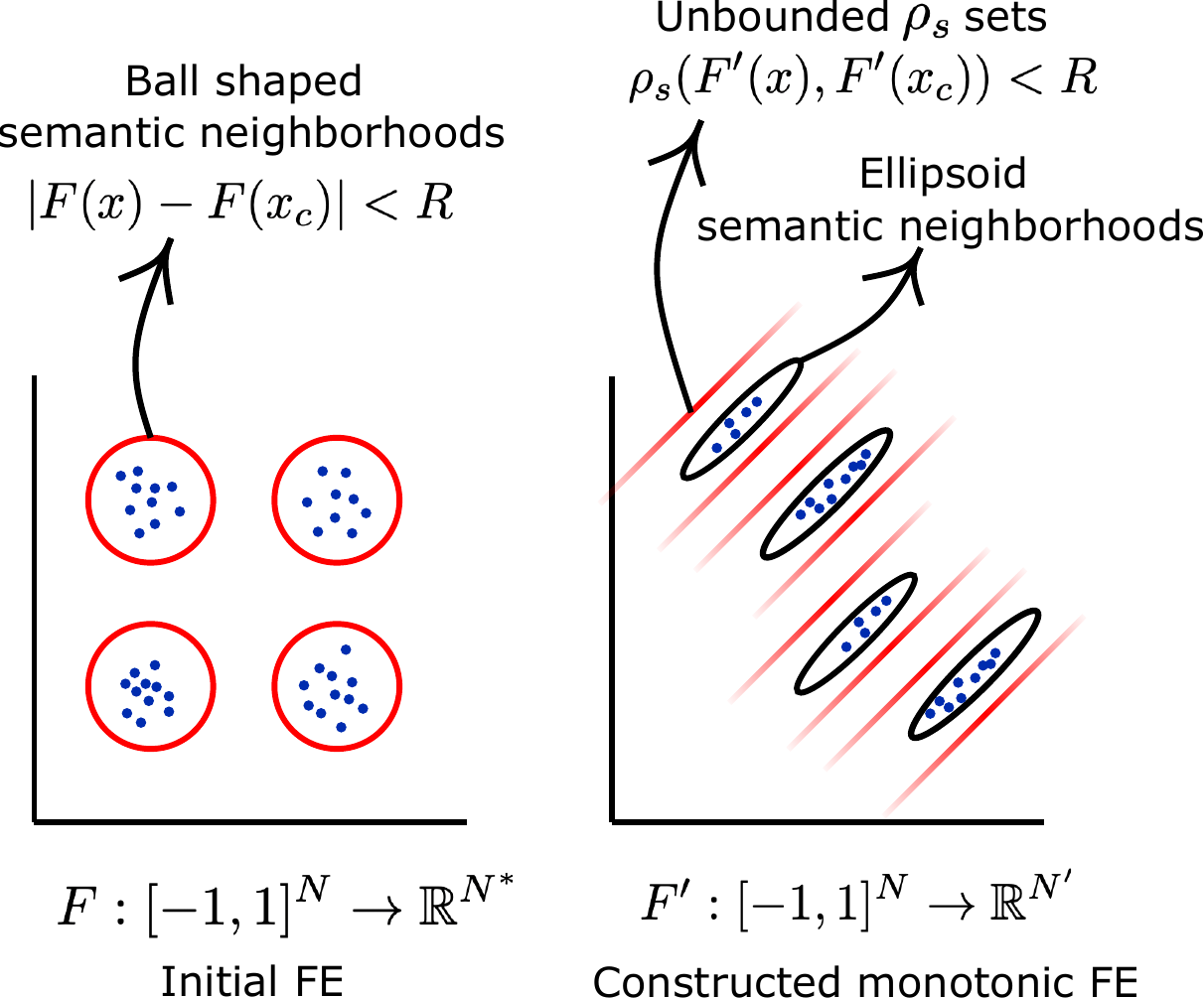}
    \caption{Preservation of the semantic neighborhood partition under the constructed monotonic FE (F'). Red circles denote the original ball-shaped semantic neighborhoods in the feature space of (F). Red contours illustrate the corresponding unbounded sets defined by the semi-metric ($\rho_s$), into which these neighborhoods are mapped in the feature space of the constructed monotonic (F'). Black ellipses represent the semantic neighborhoods within the ($\rho_s$)-induced sets that contain all associated samples.}
    \label{fig:two_clusterings}
\end{figure}

\begin{theorem}[Representation cost of monotonicity]
\label{thm:preservation}
Let $F: [-1,1]^N \to \mathbb{R}^{N^*}$ be a feature extractor that induces a semantic neighborhood partition of a dataset $D$ with ball-shaped neighborhoods $\{C_i\}_{i=1}^{C}$, and let $0 \le k \le N^*$ be the number of non-monotone coordinate functions of $F$. Then there exists an increasing feature extractor $F': [-1,1]^N \to \mathbb{R}^{N'}$ with $N' = N^* + k \leq 2N^*$ and semantic neighborhoods $\{C'_i\}_{i=1}^{C}$ with centers $c_i$ in $\mathbb{R}^{N'}$ that induce the same partition of $D$ as shown in the Figure \ref{fig:two_clusterings}.
\end{theorem}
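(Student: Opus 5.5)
The plan is to build $F'$ directly from the monotonic decomposition of Theorem~\ref{thm:decomp}, and then transport the ball-shaped neighborhoods through the semi-metric $\rho_s$ of Eq.~\eqref{eq:semi_metric} using Lemma~\ref{lemma:elips}.

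\textbf{Step 1: construct $F'$.} Reorder the coordinates of $F$ so that $F^1,\dots,F^k$ are the non-monotone ones and $F^{k+1},\dots,F^{N^*}$ are monotone. For a monotone coordinate that is decreasing, replace $F^j$ by $-F^j$. This is an isometry of $\mathbb{R}^{N^*}$ (a reflection), so balls stay balls, centers are simply reflected, and the partition is unchanged. We may therefore assume these coordinates are increasing.

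For each $j\le k$, apply Theorem~\ref{thm:decomp} to the scalar $F^j$ to write $F^j=F^j_{\mathrm{inc}}+F^j_{\mathrm{dec}}$. Then define
\[
F'^{\,2j}:=F^j_{\mathrm{inc}},\qquad F'^{\,2j-1}:=-F^j_{\mathrm{dec}},\qquad F'^{\,j+k}:=F^j \ (j>k).
\]
Every coordinate of $F'$ is increasing, each is $C^K$ (the Taylor-based pieces in the proof of Theorem~\ref{thm:decomp} are at least $C^1$, which is all Definition~\ref{def:mono_fe} needs), and $N'=N^*+k\le 2N^*$.

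By construction $F'^{\,2j}-F'^{\,2j-1}=F^j$. Define the linear map $\pi:\mathbb{R}^{N'}\to\mathbb{R}^{N^*}$ by $\pi(y)_j=y_{2j}-y_{2j-1}$ for $j\le k$ and $\pi(y)_j=y_{j+k}$ for $j>k$. Then $\pi\circ F'=F$ and $\rho_s(y,y')=\|\pi(y)-\pi(y')\|$.

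\textbf{Step 2: transport the neighborhoods.} Let $C_i$ be the ball of radius $R_i$ centered at $a_i$. Pick any $c_i$ with $\pi(c_i)=a_i$, which is possible because $\pi$ is surjective. Set $P_i:=\{F'(x): x\in D,\ F(x)\in C_i\}$. This is a finite set, and $\rho_s(F'(x),c_i)=\|F(x)-a_i\|<R_i$, so $P_i\subset C^{\rho_s}_{R_i}(c_i)$. Lemma~\ref{lemma:elips} then gives an ellipsoidal semantic neighborhood $C'_i$ with $P_i\subset C'_i\subset C^{\rho_s}_{R_i}(c_i)=\pi^{-1}(C_i)$.

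\textbf{Step 3: verify the partition.} Since the $C_i$ are disjoint, their preimages $\pi^{-1}(C_i)$ are disjoint, and hence so are the $C'_i$. Each $F'(x)$ lies in $C'_i$ where $i$ is the unique index with $F(x)\in C_i$. It lies in no other $C'_{i'}$, because that would force $F(x)=\pi(F'(x))\in C_{i'}$. Thus $F'$ induces the same partition of $D$.

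\textbf{Main obstacle.} Disjointness does not follow from the $C'_i$ being ellipsoids in their own right. The sets $C^{\rho_s}_{R}$ are unbounded cylinders, not neighborhoods, and disjointness is inherited only through the containment in $\pi^{-1}(C_i)$. This is exactly why the inner ellipsoid of Lemma~\ref{lemma:elips}, with small $\beta$, is needed.

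A secondary point to check is that Theorem~\ref{thm:decomp} is applied only to the $k$ non-monotone coordinates. Otherwise the dimension count $N^*+k$ would fail.
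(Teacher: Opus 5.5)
Your proposal is correct and follows the paper's proof essentially step for step. It uses the same construction $F'^{(2j)}=F^{(j)}_{\mathrm{inc}}$, $F'^{(2j-1)}=-F^{(j)}_{\mathrm{dec}}$, $F'^{(k+j)}=\pm F^{(j)}$, the same identity turning Euclidean distance in $F$-space into the semi-metric $\rho_s$ in $F'$-space, and Lemma~\ref{lemma:elips} to shrink each unbounded $\rho_s$-ball to an ellipsoid containing the relevant data. Beyond that, you tighten three points the paper leaves implicit: you lift each center through the surjective linear map $\pi$, whereas the paper tacitly writes the center as $F'(c_i)$ for some input $c_i$ and so assumes it lies in the image of $F$; you allow per-ball radii $R_i$; and you prove disjointness explicitly via $C'_i\subset\pi^{-1}(C_i)$.
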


\begin{proof}
Given a FE
\(
F : \mathbb{R}^N \rightarrow \mathbb{R}^{N^*},
\)
let
\(
\{F^{(j)}_{\mathrm{inc}},\, F^{(j)}_{\mathrm{dec}}\}_{j=1}^{N^*}
\)
denote the decomposition increasing and decreasing FEs  guaranteed by Theorem~\ref{thm:decomp}. 
Let \(0 \le k \le N^*\) be the number of non-monotonic coordinate functions 
\(F^{(j)}\); without loss of generality, assume that these correspond to indices 
\(j \in \{1,\dots,k\}\). The construction of the monotonic FE \(F'\) proceeds as follows. 
Each non-monotonic coordinate \(j \in \{1,\dots,k\}\) is decomposed into the pair
\(
\{F^{(j)}_{\mathrm{inc}},\, -F^{(j)}_{\mathrm{dec}}\},
\)
yielding two  increasing coordinates. For the remaining 
\(N^* - k\) monotonic coordinates, increasing coordinates are left unchanged, 
while decreasing coordinates are multiplied by \(-1\) to make them increasing. As a result, the constructed increasing FE consists of
\[
N' = 2k + (N^* - k) = N^* + k \le 2N^*
\]
coordinates. The resulting increasing FE \(F'\) is formally 
defined as follows:

\begin{equation}
\label{eq:const}
\begin{aligned}
&F' :\; \mathbb{R}^N \rightarrow \mathbb{R}^{N^* - k + 2k} = \mathbb{R}^{N'}
\\
&F'^{(2j-1)} = -F^{(j)}_{\mathrm{dec}}, 
F'^{(2j)} = F^{(j)}_{\mathrm{inc}},
j \in \{1,\dots,k\}
\\
&F'^{(k+j)} =
\begin{cases}
F^{(j)},  \text{if } F^{(j)} \text{ is increasing}, \\[0.2em]
-\,F^{(j)},  \text{if } F^{(j)} \text{ is decreasing},
\end{cases}
 j \in \{k+1,\dots,N^*\}.
\end{aligned}
\end{equation}
By construction, for all \(j \in \{1,\dots,k\}\), the odds indexed components are $F'^{(2j-1)} = -F^{(j)}_{\mathrm{dec}}$ and the even indexed components are $F'^{(2j)} = F^{(j)}_{\mathrm{inc}}$. Given that $F_{inc}^{(j)} + F_{dec}^{(j)} = F^{(j)}$, the following expression is true:
\begin{equation}
\label{eq:diff}
F'^{(2j)} - F'^{(2j-1)} = F^{(j)}
\end{equation}
Assume that there is a ball-shaped semantic neighborhood from Definition \ref{def:neighborhood} in the $F$ feature space from. The distance between representations of some data point $x \in R^N$ and neighborhood center $c \in R^N$ is expressed as: 
\begin{equation}
    ||F(x) - F(c)||^2 = \sum_{j=1}^{N^*} (F^{(j)}(x) - F^{(j)}(c))^2
\end{equation} 
This implies the Euclidean distance in $F$-space equals the semi-metric $\rho_s$, Eq. \ref{eq:semi_metric} in $F'$-space:
\[
\|F(x) - F(c)\|^2 = \rho_s(F'(x), F'(c))^2.
\]

We define prototype neighborhoods $C'_{is} = \{x \in \mathbb{R}^{N'} : \rho_s(x, F'(c_i)) < R\}$ in $F'$-space. By the distance equality, these sets have the same sample assignments as the original neighborhoods:
\[
\forall x \in D,\ i \in \{1,\ldots,C\}: \quad F(x) \in C_i \iff F'(x) \in C'_{is}.
\]

By Lemma~\ref{lem:ellipsoid}, for each $C'_{is}$ there exists a proper semantic neighborhood $C'_i$ (Definition~\ref{def:neighborhood}) that contains all data points from $C_i$ and is contained within $C'_{is}$. Therefore, the neighborhoods $\{C'_i\}_{i=1}^C$ in $F'$-space induce the same partition as $\{C_i\}_{i=1}^C$ in $F$-space.
\end{proof}

\section{Additional Experimental Details}
\label{app:exp_details}
\subsection{Supervised Experiments}
In this section, we describe the MKAN configurations used in the supervised setting. An MKAN layer is specified in the format [input size, output size, grid range, grid size], and a multilayer MKAN is specified as a sequence of such layers. For fully monotonic datasets such as Synthetic, we constrain the entire MKAN layer to be monotonic, whereas for partially monotonic datasets only the edges of the first layer corresponding to monotonic inputs are monotonic, and the second layer (if present) is fully monotonic. All experiments in this section are run on AMD EPYC 75F3 CPU, except those on the ChestXRay dataset, which are run on an A100 GPU. The precise training parameters like $lr$, batch size etc. are left untouched with respect to \cite{igel2024smooth}.  Table~\ref{tab:mkans_supervised_config} reports the MKAN configurations and parameter counts for each dataset; we vary only the number of layers and the hidden dimension size. A single run took at most 10 minutes. 

\begin{table}[h]
    \centering
    \begin{tabular}{|c|c|c|}
         Dataset & MKAN Configuration  & Parameters \\ \hline
         Synthetic & [$N_{in}$, $N_{in}$, (0,1), 5], [$N_{in}$, 1, (-3,3), 5] & 45, 145, 301 \\ \hline
         COMPAS & [13, 3, (-2,2), 5], [3, 1, (-3,3), 5] & 92\\ \hline
         LoanDefaulter & [28, 10, (-2,2), 5], [10, 1, [
         (-3, 3), 5] & 2041\\ \hline
         ChestXRay & [130, 10, (-2,2), 5], [10, 1, [
         (-3, 3), 5] & 9181 \\ \hline
         Heart Disease & [13, 1, (-2,2), 5] & 92 \\ \hline
         BlogFeedback & [276, 2, (-2,2), 5], [2, 1, (-3,3), 5] & 3881 \\  \hline
         AutoMPG & [8, 3, (-2,2), 5], [3, 1, (-3,3), 5] & 193 \\ \hline
    \end{tabular}
    \caption{The table represents MKANs configurations and parameters numbers for dataset used in supervised evaluations.}
    \label{tab:mkans_supervised_config}
\end{table}

Here we also provide a short description of the datasets: COMPAS \cite{angwin2022machine} (6172 samples, 13 inputs with 4 monotonic), LoandDefaulter (0.5M samples, 28 inputs with 5 monotonic), ChestXRay (5606 samples, 130 inputs with 2 monotonic), Heart Disease (303 samples, 13 inputs with 2 monotonic) for classification and BlogFeedback \cite{buza2013feedback} (54000 samples, 276 inputs with 8 monotonic), Auto MPG (398 samples, 8 inpits with 3 monotonic) for regression. Inputs for the ChestXRay dataset consist of given features and of features of actual images produced by ResNet18 mode and evaluations are made with fixed and trainable ResNet18.

\subsection{Self-supervised experiments}
\subsubsection{Ellipses dataset}
As a proof of concept, we construct a synthetic dataset in which, unlike those used in related work, all observed pixels depend monotonically on three ground-truth latent factors $z = (z_1, z_2, z_3)$. Each sample $x$ is a $20 \times 20$ grayscale image containing an exponentially decaying elliptical spot:
\begin{equation}
    x_{i,j} = \frac{z_3}{100} \exp\!\left(-\frac{(i-10)^2}{z_1^2} - \frac{(j-10)^2}{z_2^2}\right), \quad i,j \in \{1, \ldots, 20\},
\end{equation}
centered at $(10, 10)$ with semi-axes $r_1 = z_1$ and $r_2 = z_2$, where $z_1, z_2 \sim \mathcal{U}\{1, \ldots, 10\}$ and $z_3 \sim \mathcal{U}\{1, \ldots, 100\}$. By construction, the mapping from latent factors to observations is strictly monotone: increasing $z_1$ or $z_2$ expands the set of non-zero pixels, while increasing $z_3$ raises their intensity uniformly. The dataset comprises 8K training and 2K test samples.

We train fixed-variance ($\sigma = 0.05$) DIP-VAE framework across $\lambda_b = \lambda_{ob} \in \{0, 10, 50\}$, with learning rate $0.001$ (Adam optimizer) and batch size $256$, performing three runs of 2K epochs each. DCI, Spearman's $r$, and reconstruction $l_2$ error scores are recorded every 10 epochs, and the final values are selected for reporting with 2 $\sigma$ error bars (randomness comes from weights initialization). For each training configuration, we evaluate five architectures: three unconstrained (Linear, MLP, KAN) and two monotonic (SMM, MKAN) as both encoder and decoder. KAN and MKAN layers are specified as [input size, output size, grid range, grid size], and linear layers as [input size, output size]. Table~\ref{tab:mkans_selfsupervised_config_synt} reports the encoder and decoder configurations. A large grid size is used in the KAN and MKAN decoders to effectively reconstruct the original image with comparison to other architectures and to accurately capture the distribution of learned features, which may have a small scale relative to the grid range; the grid range itself could alternatively be tuned to permit a smaller grid size. The lack of an adaptive grid range is a current limitation of the MKAN architecture. All experiments here are performed on CPU, and a single run takes at most 40 minutes. 
\begin{table}[h]
    \centering
    \begin{tabular}{|c|c|c|c|}
        Architecture & Encoder config & Decoder config & Parameters \\  \hline
         Linear &  [400, 3] & [3, 400] & 1200,1200 \\ \hline
         MLP & [400, 30], ReLU, [30, 3] & [3, 30], ReLU, [30, 400] & 12090, 12090\\ \hline
         KAN & [400, 30, (0,1), 30], & [3, 30, (-3,-3), 200], &  387K, 2M\\
             & [30, 3, (-3,3), 3] & [30, 400, (-3.-3), 200] &  \\ \hline
         MKAN & [400, 30, (0,1), 30], & [3, 30, (-3,-3), 200],& 387K, 2M\\
              & [30, 3, (-3,3), 3]    & [30, 400, (-3.-3), 200] & \\ \hline
         SMM & [400, 3, 6,6] & [3, 400, 6, 6] &  43200, 43200 \\ \hline
        
    \end{tabular}
    \caption{The table represents encoder and decoder configurations and parameters numbers across different types of architectures. }
    \label{tab:mkans_selfsupervised_config_synt}
\end{table}

\subsubsection{Representation cost of monotonicity}
We evaluate KANs and MKANs across feature sizes $\{2, 4, 8, 16, 32\}$ on four datasets: MNIST (784 inputs), Fashion-MNIST (784 inputs), Dry Bean (16 inputs), and MAGIC (10 inputs). For each configuration (feature size, dataset, and architecture), we perform five runs of 200 epochs each, with learning rate $0.001$ (Adam optimizer) and batch size $256$, using the VAE framework with fixed variance $0.05$. For classification, we employ a KNN classifier with $k=10$, as it appeared to be the optimal value. Learned features are extracted and used for inference on the test partition of each dataset. Again, we specify MKAN and KAN as [input size, output size, grid range, grid size] and the configurations are listed in the Table \ref{tab:mkans_selfsupervised_config_real}, except the case for 32 feature sizes of MKANs in Fashion MNIST, where we use a grid size 8 to achieve performance higher than KANs with features size 16. Here, we A100 GPU and a single run take at most 10 minutes. 

\begin{table}[h]
    \centering
    \begin{tabular}{|c|c|c|c|}
       Architecture & Encoder config & Decoder config & Parameters \\ \hline  
       MKAN  &  $[N_{in}, N_{out}, (0,1), 5]$ & $[N_{out}, N_{in}, (-3,3), 5]$ & 175K, 175K \\ \hline
       KAN  & $[N_{in}, N_{out}, (0,1), 5]$ & $[N_{out}, N_{in}, (-3,3), 5]$ & 175K, 175K  \\ \hline
    \end{tabular}
    \caption{The table represents encoder and decoder configurations and parameters numbers across MKANs and KANs. Parameters are calculated for the edge case of $N_{in}=784, N_{out}=32$.}
    \label{tab:mkans_selfsupervised_config_real}
\end{table}

Figures~\ref{fig:mkan_latent}, \ref{fig:mkan_latent-fashion} demonstrate monotone latent alignment in a setting where monotone transformations carry semantic meaning: increasing one of the learned latent components monotonically transforms digit ``3'' into digit ``9'' by drawing the missing stroke or transforms t-shirt to pullover.

\begin{figure}[!hh]
\centering
\includegraphics[width=0.95\columnwidth]{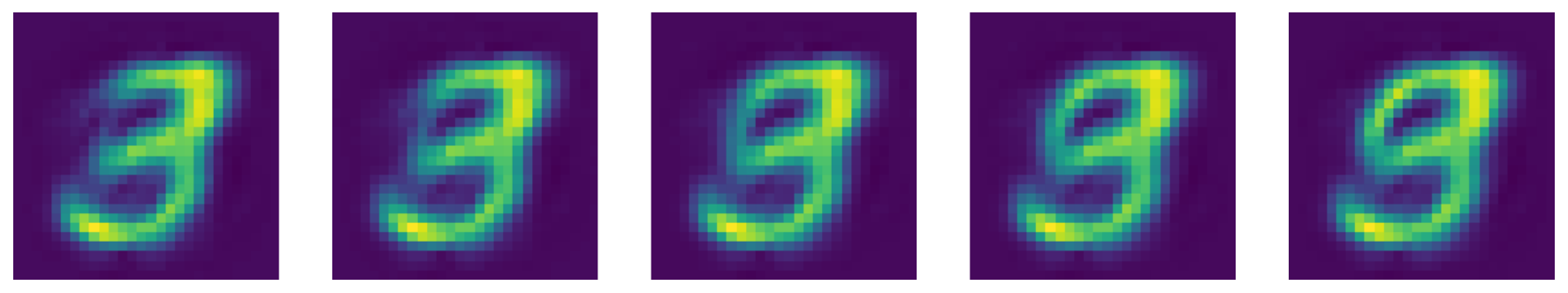}
    \caption{
     Latent traversal in a separate MKAN-VAE trained on Fashion MNIST: the figure shows the evolution of an initial observation (t-shirt) into a pullover under monotone increase of one learned latent component, illustrating that monotone latent alignment generalizes beyond the datasets in Figure~\ref{fig:kan_mkan_acc}.}
    \label{fig:mkan_latent}
\end{figure}

\begin{figure}[!hh]
\centering
\includegraphics[width=0.95\columnwidth]{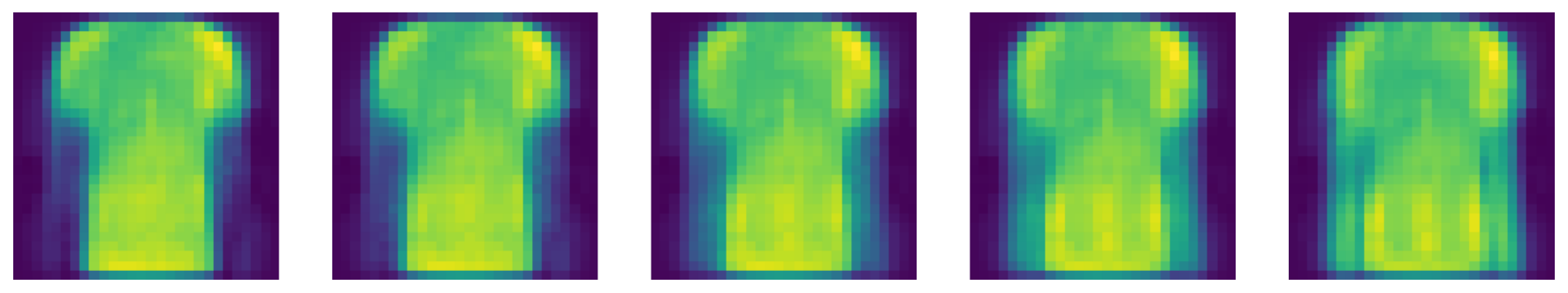}
    \caption{
    The figure demonstrates the evolution of the initial observation, an image of t-shirt, obtained through the MKAN decoder from trained VAE, to an image of pullover, while increasing the value of one of the latent components.}
    \label{fig:mkan_latent-fashion}
\end{figure}
Here, we also provide a short description and motivation for the datasets. We utilize MNIST, Fashion MNIST, Dry Bean~\cite{dry_bean_602}, and 
MAGIC Gamma Telescope~\cite{magic_gamma_telescope_159} datasets from the UCI repository, which were selected to ensure that monotonic feature transformations carry genuine 
semantic meaning, and to cover a range of scales and class cardinalities. In MNIST and Fashion MNIST, 
monotonically increasing pixel intensities in specific spatial regions induce meaningful 
class transitions (e.g., $3 \to 9$, $1 \to 7$ or t-shirt to pullover). In Dry Bean and MAGIC Gamma 
Telescope, features correspond to physical measurements whose monotonic changes admit 
direct domain interpretation - larger area unambiguously implies a larger bean, while 
higher shower size implies a brighter gamma-ray event

\end{document}